\def\arxivcopy{}
\documentclass{article}
\usepackage{iclr2027_conference,times}
\usepackage{amsmath,amssymb,amsthm}
\usepackage{graphicx}
\usepackage{booktabs}
\usepackage{longtable}
\usepackage{natbib}
\usepackage{url}
\usepackage[colorlinks=true,allcolors=blue]{hyperref}
\usepackage{tikz}
\usetikzlibrary{positioning,arrows.meta,decorations.pathreplacing}
\definecolor{ovblue}{HTML}{2A78D6}
\definecolor{ovpurple}{HTML}{8A5CD6}
\definecolor{ovgreen}{HTML}{1BAF7A}
\definecolor{ovorange}{HTML}{EB6834}
\definecolor{ovred}{HTML}{C8102E}
\usepackage{xspace}

\newcommand{\AllCertTerminalN}{3{,}748\xspace}

\newcommand{\AllFailPerTime}{5\xspace}

\newcommand{\AllFailTerminal}{14\xspace}

\newcommand{\AllLicPerTime}{74.5\%\xspace}

\newcommand{\AllNConds}{112\xspace}

\newcommand{\AllPerTimeFacts}{14\xspace}

\newcommand{\AllTerminalFacts}{34\xspace}

\newcommand{\AllproNConds}{71\xspace}

\newcommand{\CanonicalLr}{0.002\xspace}
\newcommand{\CompleteRealZThree}{99.0\%\xspace}

\newcommand{\ConfirmFacts}{3\xspace}
\newcommand{\ConfirmN}{43\xspace}

\newcommand{\EarCertRgeFrac}{53.3\%\xspace}

\newcommand{\EarFailTerminal}{7\xspace}
\newcommand{\EarLicAzuma}{22.1\%\xspace}
\newcommand{\EarLicComplete}{31.2\%\xspace}

\newcommand{\EarLicPerfact}{37.1\%\xspace}

\newcommand{\EarLicScreened}{64.2\%\xspace}

\newcommand{\EarLicTerminal}{80.1\%\xspace}
\newcommand{\EarNConds}{41\xspace}
\newcommand{\EarNFacts}{1{,}818\xspace}

\newcommand{\EarTerminalFacts}{14\xspace}
\newcommand{\FixedEventsHi}{210\xspace}
\newcommand{\FixedEventsLo}{0\xspace}

\newcommand{\HistMaxFailZ}{29.1\xspace}
\newcommand{\HistN}{45\xspace}

\newcommand{\HorizonBigCriterion}{met in both\xspace}

\newcommand{\KappaMax}{128\xspace}
\newcommand{\MarginTheta}{0.66\xspace}
\newcommand{\NCanonical}{15/15\xspace}
\newcommand{\NCheckpoints}{6\xspace}
\newcommand{\NFamilies}{3\xspace}
\newcommand{\PerFactTheta}{56\xspace}

\newcommand{\PooledAzumaEvents}{0\xspace}

\newcommand{\PooledCompleteAllowed}{130.8\xspace}
\newcommand{\PooledCompleteEvents}{0\xspace}
\newcommand{\PooledCompleteRuns}{34{,}256\xspace}
\newcommand{\PooledMarginAllowed}{911.6\xspace}
\newcommand{\PooledMarginEvents}{5\xspace}
\newcommand{\PooledMarginRAllowed}{485.6\xspace}
\newcommand{\PooledMarginREvents}{0\xspace}

\newcommand{\PooledPerfactAllowed}{23.5\xspace}
\newcommand{\PooledPerfactEvents}{1\xspace}

\newcommand{\PooledROnlyEvents}{0\xspace}
\newcommand{\PooledROnlyRuns}{44{,}976\xspace}

\newcommand{\PowerFamily}{64\xspace}
\newcommand{\PowerKFive}{8\xspace}
\newcommand{\PowerPEightyFive}{0.304\xspace}
\newcommand{\PowerRuns}{32\xspace}

\newcommand{\ProLicAzuma}{16.4\%\xspace}
\newcommand{\ProLicComplete}{19.9\%\xspace}

\newcommand{\ProLicMargin}{11.5\%\xspace}

\newcommand{\ProLicPerfact}{33.2\%\xspace}

\newcommand{\ProNConds}{47\xspace}

\newcommand{\ProTerminalFacts}{19\xspace}
\newcommand{\RatioBelowForgot}{0\xspace}
\newcommand{\RatioBelowN}{1{,}436\xspace}
\newcommand{\RatioNearForgot}{59\xspace}
\newcommand{\RatioNearN}{61\xspace}
\newcommand{\RatioTopShare}{63.1\%\xspace}
\newcommand{\RhoMax}{0.998\xspace}
\newcommand{\RhoMin}{0.974\xspace}

\newcommand{\RichEvents}{468\xspace}

\newcommand{\RichFailTerminal}{1\xspace}
\newcommand{\RichLicAzuma}{9.6\%\xspace}
\newcommand{\RichLicComplete}{17.0\%\xspace}

\newcommand{\RichLicTerminalStable}{28.1\%\xspace}

\newcommand{\RichLocFacts}{1\xspace}

\newcommand{\RichLocResult}{left undecided\xspace}

\newcommand{\RichN}{24\xspace}

\newcommand{\RichValK}{0\xspace}
\newcommand{\RichValQ}{3\xspace}
\newcommand{\RichValResult}{met\xspace}
\newcommand{\RichZeroEvent}{13\xspace}

\newcommand{\SEOne}{failed\xspace}
\newcommand{\SEThree}{failed\xspace}
\newcommand{\SETwo}{met\xspace}

\newcommand{\SFamilyExit}{44\xspace}
\newcommand{\SFamilyLic}{60.9\%\xspace}
\newcommand{\SFamilyP}{0.28\xspace}
\newcommand{\SFamilyRate}{5.5\%\xspace}
\newcommand{\SFamilyTraj}{800\xspace}

\newcommand{\SNEval}{31\xspace}

\newcommand{\SNTotal}{33\xspace}

\newcommand{\SProbeRuns}{38\xspace}
\newcommand{\SScreenFailZHi}{56\xspace}
\newcommand{\SScreenFailZLo}{32\xspace}

\newcommand{\SShieldFailK}{5\xspace}

\newcommand{\STaskBGain}{25.3\xspace}
\newcommand{\SThresholdQ}{4\xspace}

\newcommand{\SeedQwenEventsHi}{233\xspace}
\newcommand{\SeedQwenEventsLo}{0\xspace}
\newcommand{\SimRuns}{59\xspace}

\newcommand{\SpreadAboveKappa}{59.2\%\xspace}
\newcommand{\SpreadStable}{1.6\xspace}
\newcommand{\SpreadUnstable}{2.7\xspace}
\newcommand{\SpreadViolated}{14.6\xspace}
\newcommand{\SpreadWideN}{421\xspace}
\newcommand{\SpreadWideRge}{94.5\%\xspace}
\newcommand{\StressLrHi}{0.012\xspace}
\newcommand{\StressLrLo}{0.008\xspace}
\newcommand{\StressN}{3\xspace}
\newcommand{\StressRhoHi}{0.995\xspace}
\newcommand{\StressRhoLo}{0.939\xspace}

\newcommand{\TOneK}{0\xspace}
\newcommand{\TOneN}{19\xspace}

\newcommand{\TOneResult}{met\xspace}

\newcommand{\TProbeRuns}{68\xspace}

\newcommand{\TTwoFacts}{2\xspace}

\newcommand{\TTwoResult}{met\xspace}
\newcommand{\TTwoRge}{2\xspace}

\newtheorem{theorem}{Theorem}
\newtheorem{proposition}{Proposition}
\newtheorem{assumption}{Assumption}

\title{When Can First-Order Models of Fine-Tuning\\Bound Forgetting?}
\ifdefined\arxivcopy
\author{Jianchang Su \& Wei Zhang\\University of Connecticut}
\iclrfinalcopy
\else
\author{Anonymous authors\\Paper under double-blind review}
\fi

\begin{document}
\maketitle
\ifdefined\arxivcopy
\lhead{Preprint}
\else
\lhead{Under review as a conference paper at ICLR 2027}
\fi
\begin{abstract}
Fine-tuning a language model on new data can make it forget facts that it
should keep. We ask whether measurements taken at the start of a fine-tuning
run can bound, for each protected fact, the probability that the run makes
the model forget it. In LoRA fine-tuning with stochastic gradient descent on
models from 0.6B to 14B
parameters, a first-order response model estimated by finite-difference
probes predicts changes of per-fact margins with correlation
\RhoMin--\RhoMax. Predictions of forgetting built on this model
nevertheless failed; in Qwen3-0.6B, forgetting requires parameter changes far
outside the region in which the model was validated. The probes can,
however, bound the probability that a margin first falls below a boundary
near zero: we derive Freedman and Azuma first-passage bounds for a linear
surrogate of the margin and test on new runs whether they hold for the
model. The bounds contain a term $R$ that measures how much the response
coefficients change during the run. The simplified Freedman bound, which
sets $R = 0$, certified most facts but was violated in \AllFailTerminal{} of
\AllNConds{} conditions, and in these conditions every fact on which it was
violated had $R \ge a$, where $a$ is the distance of the fact's margin to
the boundary. The complete Freedman bound certifies only facts with $R < a$,
and it held in every condition. On the violated facts, the spread of the
margin across test runs was a median of \SpreadViolated{} times the
prediction of the response model, so the failures are breakdowns of the
model, and in our data they occurred only where $R \ge a$. We found this pattern post hoc and tested it in two preregistered
confirmatory studies with \ConfirmN{} new conditions: the complete bound
held in all of them, and the simplified bound failed there on only
\ConfirmFacts{} facts, each with $R \ge a$. A screen that
decided for a whole condition whether to use the simplified bound failed on
new starting states. First-order models of fine-tuning can thus bound
forgetting on the facts whose response coefficients change by less than
their distance to the boundary.
\end{abstract}

\section{Introduction}
\label{sec:intro}

Fine-tuning a language model on new data can make it forget facts that it
answered correctly before \citep{kirkpatrick2017ewc,luo2023forgetting}, and
sequential knowledge editing has the same effect \citep{gupta2024editing}.
Existing methods predict which examples an update will forget and rank them
by risk \citep{jin2024forecasting,jin2025lowrank}. A ranking, however,
leaves open how likely a given fact is to survive. Before an
update, one would like a guarantee for each protected fact: an upper bound
on the probability that the run makes the model forget it.

This paper asks when such a bound can be computed from measurements taken
at the start of the run. We study LoRA fine-tuning \citep{hu2022lora} with
stochastic gradient descent (SGD) and gradient clipping, and for each protected fact we track its margin,
the log-probability gap between the correct answer and the strongest
alternative. A fact is forgotten if its margin falls below zero within the
horizon. Local models of training, such as influence functions and
linearizations \citep{koh2017influence,pruthi2020tracin,malladi2023kernel},
are accurate for small parameter changes, and our first-order response
model is accurate in the same sense: it predicts margin changes under
held-out perturbations with correlation \RhoMin--\RhoMax{} across
\NCheckpoints{} checkpoints from 0.6B to 14B parameters. Yet five
predictions of forgetting built on this model failed on new data.
Predicting what happens over a run therefore requires more than local
accuracy.

A coupling argument separates the two sources of error and shows what to
bound instead (Section~\ref{sec:theory}). A local model that is run forward
errs by at most its accumulated local error plus the probability that the
process leaves the region in which the model was checked. In Qwen3-0.6B,
forgetting a fact requires a parameter change about 18 times larger than
the largest change at which we validated the model, so the course of
forgetting lies outside what the model describes. The model can, however,
bound the probability
that a margin first reaches a boundary near zero. We derive such
first-passage bounds for a linear surrogate of the margin from Freedman's
inequality \citep{freedman1975tail} and the Azuma inequality, and we test on
new runs whether they hold for the model itself. Both contain a term $R$ that measures how much the
response coefficients change during the run, and all their inputs are
computed before the run (Figure~\ref{fig:overview}). Setting $R = 0$ gives
a simplified Freedman bound that certifies most facts.

Our main finding concerns this simplification. Across \AllNConds{}
conditions, the simplified Freedman bound was violated in
\AllFailTerminal{}, and every one of the \AllTerminalFacts{} facts on which
it was violated had $R \ge a$, where $a$ is the distance of the fact's
margin to the boundary (Figure~\ref{fig:main}a). $R$ comes from the probes
and $a$ from a reference run, so both are known before the run. The
complete Freedman bound, which keeps $R$, certifies only facts with
$R < a$, and both it and the Azuma bound held in every condition. The
failures go beyond the simplification: a bound with the correct
coefficients at each time also failed on facts with $R \ge a$, and on
the violated facts the spread of the margin across test runs was a median
of \SpreadViolated{} times the prediction. The failures are therefore
breakdowns of the first-order model, and they occurred only on facts with
$R \ge a$, although most such facts survived (Section~\ref{sec:evidence}). Because we found this pattern after an interim
look at the data, we preregistered two confirmatory studies on new
conditions (Section~\ref{sec:prospective}). In their \ConfirmN{} conditions,
the complete bound held everywhere, and the simplified bound failed on only
\ConfirmFacts{} facts, each with $R \ge a$, so the location of the failures
rests mainly on the post hoc data.

Before this finding, we had tested a different repair: a screen that uses
the simplified bound only in conditions whose certified facts have a large
median standardized margin. It removed every failure on earlier conditions
but failed on new starting states, where the
complete bound, the Azuma bound and the per-fact rule held. This contrast is
consistent with failures that are confined to individual facts.

\paragraph{Contributions.}
\begin{enumerate}\itemsep2pt
\item Evidence that an accurate first-order response model fails to predict
forgetting over a fine-tuning run, and a coupling argument for what local
probes can bound instead (Sections~\ref{sec:setup} and~\ref{sec:theory}).
\item Freedman and Azuma first-passage bounds on per-fact forgetting with a
coefficient-change term $R$, and the finding that every observed failure of
the first-order model occurred on a fact with $R \ge a$, a condition known
before the run (Sections~\ref{sec:theory} and~\ref{sec:evidence}).
\item Prospective studies on five models from 0.6B to 14B
parameters, CounterFact facts and a doubled horizon, with two confirmatory
tests, a comparison with fitted thresholds and simulation, and a screen that
failed (Sections~\ref{sec:prospective} and~\ref{sec:discussion}).
\end{enumerate}

\section{Related work}
\label{sec:related}

\paragraph{Forgetting during fine-tuning.}
Prior work measures, forecasts, explains or reduces forgetting; we bound its
probability for each fact before the run. Example forgetting was first
measured during standard training \citep{toneva2019forgetting}. In language
models, forgetting has been measured empirically \citep{luo2023forgetting},
related to fine-tuning loss by scaling laws
\citep{kalajdzievski2024scaling}, and found smaller under LoRA than under
full fine-tuning \citep{biderman2024lora}. New knowledge also affects what
the model already knows \citep{gekhman2024newknowledge}, and sequential
editing leads to gradual and then abrupt forgetting
\citep{gupta2024editing}. Forecasting methods rank the upstream examples
that an update will forget, from logit changes \citep{jin2024forecasting}
or low-rank task-example associations \citep{jin2025lowrank}, rather than
bound a probability. Theory explains forgetting through the overlap of
neural tangent kernels \citep{doan2021ntk} and in linear regression
\citep{evron2022linear}. Mitigation changes the update: elastic weight
consolidation \citep{kirkpatrick2017ewc}, replay of maximally interfered
examples \citep{aljundi2019mir}, and null-space \citep{fang2025alphaedit} or
low-curvature \citep{ikram2026crispedit} projection. Our bounds leave the
update unchanged.

\paragraph{Local models of training.}
Influence functions \citep{koh2017influence}, gradient tracing
\citep{pruthi2020tracin} and kernel linearizations of fine-tuning
\citep{malladi2023kernel} describe training locally, and interpretability
methods built on them are usually validated on short horizons
\citep{bereska2024review,mueller2025mib}. We ask what such a local model
supports over a full fine-tuning run.

\paragraph{Bounds and risk control.}
Our bounds apply maximal martingale inequalities
\citep{freedman1975tail} at a fixed horizon; time-uniform versions
\citep{howard2020timeuniform} would allow monitoring during the run.
Generalization bounds for language models bound a population risk after
training \citep{lotfi2024nonvacuous}, whereas we bound a per-fact event
during a run. Conformal prediction
\citep{vovk2005algorithmic,angelopoulos2021gentle}, risk-controlling
prediction sets \citep{bates2021rcps}, conformal risk control
\citep{angelopoulos2024crc}, Learn-then-Test \citep{angelopoulos2025ltt} and
selective prediction \citep{geifman2017selective} calibrate thresholds on
exchangeable data. Our conditions were chosen by study design, which rules
out exchangeability, so we fixed the thresholds of our rules on earlier
conditions and tested them on new ones.

\section{Setting and the first-order response model}
\label{sec:setup}

\paragraph{Two-task fine-tuning.}
A model is first fine-tuned on task A, which contains the protected facts,
and then on task B. We use two data sources. With \emph{synthetic facts},
task A teaches 320 associations between invented entities and one of four
answer labels, each in two paraphrase templates. We protect 64 of them and
query each with a third template reserved for evaluation. Task B
teaches 128 new associations of the same form. With \emph{CounterFact}
\citep{meng2022rome}, the 64 protected facts are true facts that the
pretrained model already answers correctly, so task A is skipped, and task B
trains on 256 counterfactual rewrites of other subjects, which is knowledge
editing by fine-tuning. Llama-3.2-1B and Gemma-3-12B denote the
instruction-tuned checkpoints. All models are adapted with LoRA of rank 8 on all
linear layers \citep{hu2022lora} over 4-bit weights
\citep{dettmers2023qlora}. Task A uses AdamW. A run proceeds only if the
accuracy on the protected facts at the start of task B reaches a
fixed threshold (0.75 for synthetic facts, 0.95 for CounterFact).
Task B uses SGD with batch size 8 (4 for CounterFact), learning rate $\eta$,
and gradient clipping at norm $G_0 = 1$. We call the parameters at the
start of task B the \emph{starting state}. A \emph{condition} is a model, a
data source, a task-B learning rate and a seed; a \emph{configuration} is
the model, data source and learning rate of a condition. The seed determines task-A training and the
minibatch orders of the test runs defined below.

\paragraph{Margin and forgetting.}
For protected fact $i$ with correct answer $y_i$ among four candidates, the
margin after $t$ task-B steps is
$Y_i(t) = \log p_{\theta_t}(y_i \mid x_i) - \max_{y \neq y_i} \log
p_{\theta_t}(y \mid x_i)$. It is positive exactly when the model answers
correctly. We evaluate margins on the grid $G = \{0, 10, \dots, T\}$ with
horizon $T = 40$ steps ($T = 80$ in one study), and a \emph{forgetting
event} occurs if $Y_i(t) < 0$ for some $t \in G$. The \emph{reference run}
is one task-B run with a fixed minibatch order, and $m_i(t)$ is the margin
along it. A fact is \emph{eligible} if $\min_{t \in G} m_i(t) > 0$. Other
facts are already forgotten on the reference run, and our claims concern
eligible facts only. A \emph{test run} is a further task-B run from the same
starting state with its own minibatch order. We use 32 test runs per
condition (24 in two earlier conditions) to measure how often each fact is
forgotten.

\paragraph{First-order response model.}
Let $b_1, \dots, b_k$ ($k = 8$) be an orthonormal basis of the leading
directions of the margin gradients of 16 protected facts at the starting
state; the basis is computed from the starting state alone. For a perturbation time
$s \in G$ and a later time $t$, the response coefficients
$q_{i,t,s} \in \mathbb{R}^k$ are central finite differences,
$q_{i,t,s,j} = \bigl(Y_i^{+j}(t) - Y_i^{-j}(t)\bigr)/2\epsilon$, where
$Y_i^{\pm j}$ is the margin when $\pm\epsilon b_j$ ($\epsilon = 0.02$) is
added to the parameters at step $s$ of the reference run and the same
minibatch order is replayed up to step $t$. The first-order response model
predicts that a perturbation $\sum_j u_j b_j$ added at step $s$ changes the
margin at time $t$ by $q_{i,t,s}^\top u$. We call the finite-difference
measurements the \emph{probes}. The probes also estimate, at each
perturbation time $s$, the covariance $\Sigma_s$ of the clipped task-B
gradients projected on the basis. Each probe replays the run from a perturbation time to $T$, so
the probes cost $O(k\,|G|\,T)$ SGD steps, which grows with the square of the
horizon. All quantities used by the bounds are computed before the first
test run (Figure~\ref{fig:overview}).

\begin{figure}[t]
\centering
\begin{tikzpicture}[x=1mm, y=1mm, font=\scriptsize,
  >={Stealth[length=1.4mm,width=1.1mm]},
  stagenote/.style={align=center, inner sep=0pt, text=black!85},
  axis/.style={black!40, line width=0.35pt},
  flow/.style={-{Stealth[length=2.2mm,width=1.8mm]}, line width=1.1pt, black!45},
  bracket/.style={decorate, decoration={brace, amplitude=1.6mm}, black!55,
    line width=0.5pt},
]
\def\H{33}
\def\panel#1#2#3#4#5{%
  \fill[#3!6, rounded corners=2.5pt] (#1,0) rectangle ++(#2,\H);
  \fill[#3!85!black, rounded corners=2.5pt] (#1,\H-5.5) rectangle ++(#2,5.5);
  \fill[#3!85!black] (#1,\H-5.5) rectangle ++(#2,2.5);
  \draw[#3!70!black, line width=0.5pt, rounded corners=2.5pt]
    (#1,0) rectangle ++(#2,\H);
  \node[circle, fill=white, inner sep=0.45pt, minimum size=3.1mm,
    font=\tiny\bfseries, text=#3!85!black] at (#1+3,\H-2.75) {#4};
  \node[anchor=west, font=\scriptsize\bfseries, text=white, inner sep=0pt]
    at (#1+5.4,\H-2.75) {#5};
}

\panel{0}{28}{ovblue}{1}{Probes on a run}
\begin{scope}[xscale=0.89]
\draw[line width=1.1pt, ovblue!85!black, ->]
  plot[smooth, tension=0.7] coordinates {(5,12.5) (10,16) (16,17.5) (23,21) (28.5,23)};
\fill[black!80] (5,12.5) circle (0.75);
\node[anchor=north, inner sep=0.8pt] at (5,11.7) {$\theta_0$};
\draw[ovorange, dashed, line width=0.6pt]
  plot[smooth] coordinates {(15.2,20.3) (20,23.3) (26,25.8)};
\draw[ovorange, dashed, line width=0.6pt]
  plot[smooth] coordinates {(16.8,14.7) (21.5,16.5) (27,18.4)};
\draw[ovorange, line width=0.7pt, ->] (16,17.5) -- (15.2,20.3);
\draw[ovorange, line width=0.7pt, ->] (16,17.5) -- (16.8,14.7);
\fill[ovorange] (16,17.5) circle (0.55);
\node[text=ovorange!85!black] at (10,21.6) {$\pm\epsilon b_j$};
\node[text=ovorange!85!black] at (21.5,12.3) {step $s$};
\end{scope}
\node[stagenote] at (14,4.6) {reference run, replayed\\
  with perturbations};

\begin{scope}[xshift=32mm]
\panel{0}{38}{ovpurple}{2}{Per-fact quantities}
\draw[axis, ->] (3,10) -- (19.5,10) node[below left=0.2mm and -0.8mm] {$t$};
\draw[axis, ->] (3,10) -- (3,26);
\fill[ovpurple!22]
  plot[smooth] coordinates {(4,23.5) (8,23) (12,22.4) (16,21.9) (18.5,21.7)}
  -- plot[smooth] coordinates {(18.5,15) (16,15.7) (12,17.4) (8,19.8) (4,23.5)}
  -- cycle;
\draw[ovpurple!85!black, line width=1pt]
  plot[smooth] coordinates {(4,23.5) (8,21.3) (12,19.8) (16,18.8) (18.5,18.4)};
\draw[ovred, dashed, line width=0.6pt] (3,12) -- (19.5,12);
\node[text=ovred, anchor=south west, inner sep=0.5pt] at (3.4,12.2) {$B_0$};
\draw[<->, line width=0.5pt] (17.4,12.2) -- (17.4,18.5);
\node[anchor=east, inner sep=0.8pt] at (17.2,15.3) {$a_i$};
\node[text=ovpurple!85!black, anchor=west] at (5,25.4) {$m_i(t)$};
\draw[axis, ->] (22,10) -- (36.5,10) node[below left=0.2mm and -0.8mm] {$t$};
\draw[axis, ->] (22,10) -- (22,26);
\fill[ovred!16]
  plot[smooth] coordinates {(23,20) (26,22.6) (30,19.8) (33,16.6) (35,15)}
  -- (35,15) -- (23,15) -- cycle;
\draw[black!60, dashed, line width=0.5pt] (23,15) -- (35.5,15);
\draw[ovpurple!85!black, line width=1pt]
  plot[smooth] coordinates {(23,20) (26,22.6) (30,19.8) (33,16.6) (35,15)};
\node[text=ovred!85!black] at (27.8,17.4) {$R_i$};
\node[text=ovpurple!85!black, anchor=west] at (24,25.4) {$q_{i,t,s}$};
\node[text=black!60, anchor=north east, inner sep=0.5pt] at (35.5,14.6)
  {$q_{i,T,s}$};
\node[stagenote] at (19,4.6) {distance $a_i$, change $R_i$,\\
  variance $V_i$, increment $c_i$};
\end{scope}

\begin{scope}[xshift=74mm]
\panel{0}{34}{ovgreen}{3}{Certify per fact}
\fill[ovred!13] (3,10) -- (19,26) -- (3,26) -- cycle;
\draw[axis, ->] (3,10) -- (32,10) node[below left=0.2mm and -0.8mm] {$a$};
\draw[axis, ->] (3,10) -- (3,26.5);
\node[text=black!55, anchor=west, inner sep=0.5pt] at (3.4,26.2) {$R$};
\draw[black!60, line width=0.5pt] (3,10) -- (19,26);
\node[text=ovred!85!black] at (9.6,23.3) {$R \ge a$};
\foreach \p in {(12,12.3), (15.5,13.4), (20.5,11.2), (25,13), (27,19),
                (23,21), (29,23.5), (18,18.8), (30,15.5), (26,24.8)}
  \fill[ovgreen!85!black] \p circle (0.62);
\foreach \p in {(5.8,15.5), (8.5,19.8), (15.8,25.1)}
  \draw[black!50, line width=0.4pt] \p circle (0.62);
\foreach \p in {(6.2,18.4), (11,20.3)}
  \fill[ovred] \p circle (0.62);
\draw[<->, line width=0.5pt, ovgreen!60!black] (8.7,15.7) -- (21.6,15.7);
\fill[ovgreen!85!black] (22.3,15.7) circle (0.75);
\draw[white, line width=0.3pt] (22.3,15.7) circle (0.75);
\node[text=ovgreen!50!black, fill=ovgreen!6, inner sep=0.4pt] at (15.2,17.1)
  {$a_i - R_i$};
\node[stagenote] at (17,4.6) {Freedman bound with $a_i - R_i$\\
  $\Rightarrow$ certified facts, $U_i < 0.05$};
\end{scope}

\begin{scope}[xshift=112mm]
\panel{0}{27}{ovorange}{4}{Test on new runs}
\begin{scope}[xscale=0.86]
\draw[axis, ->] (3,10) -- (29.5,10) node[below left=0.2mm and -0.8mm] {$t$};
\draw[axis, ->] (3,10) -- (3,26);
\foreach \a/\b/\c/\d in {22/21.5/21/20, 21/19/18.5/17, 23.5/24/24.5/24,
    20.5/18/16.5/16, 22.5/20.5/20/18.5, 21.5/22/23/22, 20/17.5/15.5/14.5,
    23.8/22.5/21.5/21.5}
  \draw[black!38, line width=0.4pt]
    plot[smooth] coordinates {(4,23) (10,\a) (16,\b) (22,\c) (28,\d)};
\draw[ovred, line width=0.75pt]
  plot[smooth] coordinates {(4,23) (10,19.5) (16,16) (22,11.3) (28,10.8)};
\draw[ovred, dashed, line width=0.6pt] (3,12) -- (29,12);
\node[text=ovred, anchor=south east, inner sep=0.5pt] at (29,12.2) {$0$};
\end{scope}
\node[stagenote] at (13.5,4.6) {32 new test runs;\\
  binomial test vs $U_i$};
\end{scope}

\draw[flow] (28.3,\H/2) -- (31.7,\H/2);
\draw[flow] (70.3,\H/2) -- (73.7,\H/2);
\draw[flow] (108.3,\H/2) -- (111.7,\H/2)
  node[midway, above=0.6mm, text=black!70] {$U_i$};
\draw[bracket] (0.5,\H+1.2) -- (107.5,\H+1.2)
  node[midway, above=2mm, text=black!70]
  {determined by files written to disk before the first test run};
\draw[bracket] (112.5,\H+1.2) -- (138.5,\H+1.2)
  node[midway, above=2mm, text=black!70] {evaluation};
\end{tikzpicture}
\caption{The procedure for one condition (schematic). (1) A reference run
gives the margins $m_i(t)$, and probes along it give the response
coefficients $q_{i,t,s}$ and the gradient covariance $\Sigma_s$. (2) These
determine, for each fact, the distance $a_i$ to the boundary, the variance
estimate $V_i$, the increment bound $c_i$ and the coefficient change $R_i$,
the gap between the coefficients at time $t$ and at the end $T$. (3) The
complete bound uses $a_i - R_i$, so it certifies only facts with
$R_i < a_i$ (Sections~\ref{sec:theory} and~\ref{sec:method}). (4) A
binomial test compares the forgetting events in the test runs with $U_i$.}
\label{fig:overview}
\end{figure}

\paragraph{Accuracy of the response model.}
We test the model on held-out perturbations, random combinations of the
basis directions whose effect is measured separately and compared with the
prediction $q_{i,T,0}^\top u$. At task-B learning rate \CanonicalLr,
Table~\ref{tab:local} (Appendix~\ref{app:local}) reports \NCanonical{} passing validation runs across
\NCheckpoints{} checkpoints and \NFamilies{} families (Qwen3, Llama 3.2,
Gemma 3), with $\rho$ between \RhoMin{} and \RhoMax. At the higher learning
rates \StressLrLo{} to \StressLrHi, \StressN{} further runs gave $\rho$
between \StressRhoLo{} and \StressRhoHi. Two requirements must hold. The
basis must come from gradients: a random low-rank basis gives
$\rho = 0.057$, because the margin changes that it induces are smaller than
the numerical noise of the forward pass. Task B must use SGD, under which
the probe responses scale linearly with the perturbation size over the full
horizon, whereas under Adam they stop doing so after about ten steps
(Appendix~\ref{app:local}). Despite this accuracy, five preregistered
predictions of forgetting built on the model failed on new data
(Appendix~\ref{app:gap}), and Section~\ref{sec:theory} explains why.

\section{Bounds from local probes}
\label{sec:theory}

\subsection{What a local model can determine about a run}

We first ask what a local model can determine about a whole run. Let $P$ be
the law of the training process and $Q$ the law obtained by running a local
model forward. Suppose that the two transition kernels agree to within
$\varepsilon_t$ in total variation on a region $\mathcal N$ that contains
the starting state, and let $\tau$ be the time at which the process first
leaves $\mathcal N$.

\begin{theorem}[Coupling bound]
\label{thm:coupling}
For any event $A$ determined by the path up to $T$,
$|P(A) - Q(A)| \le P(\tau \le T) + \sum_{t \le T} \varepsilon_t$, and both
terms are attained.
\end{theorem}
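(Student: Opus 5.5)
We prove the inequality with a step-by-step maximal coupling of the two chains, and the tightness claim with two small examples.

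\emph{Construction.} Write $P_t(x,\cdot)$ and $Q_t(x,\cdot)$ for the transition kernels at step $t$. By hypothesis $\|P_t(x,\cdot)-Q_t(x,\cdot)\|_{TV}\le\varepsilon_t$ for every $x\in\mathcal N$. We build a pair $(X_t,Z_t)$ step by step with $X\sim P$ and $Z\sim Q$, both started from the same starting state. At step $t$ there are two cases.
\begin{itemize}
\item If $X_{t-1}=Z_{t-1}=x\in\mathcal N$, draw $(X_t,Z_t)$ from a maximal coupling of $P_t(x,\cdot)$ and $Q_t(x,\cdot)$. Then $\Pr(X_t\ne Z_t\mid\cdot)\le\varepsilon_t$.
\item Otherwise, move the two chains independently with their own kernels.
\end{itemize}
Each marginal is a Markov chain with the correct kernel, so $X$ has law $P$ and $Z$ has law $Q$. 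If the kernels depend on the past, we condition on the history instead; the argument does not change. To make the maximal coupling measurable in $x$, we use a kernel version of it, which exists on standard Borel spaces.

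\emph{Bound on disagreement.} Let $\sigma$ be the first time the two paths disagree. For $A$ determined by the path up to $T$,
\[
|P(A)-Q(A)|=|\Pr(X\in A)-\Pr(Z\in A)|\le\Pr(\sigma\le T).
\]
On $\{\sigma\le T\}$ there are two possibilities. In the first, $\tau<\sigma$: the common path left $\mathcal N$ before the paths split, so $\tau\le T$ for $X$. In the second, $\sigma\le\tau$: the split happened at some step $t$ while the common state $X_{t-1}=Z_{t-1}$ was still in $\mathcal N$. A union bound gives
\[
\Pr(\sigma\le T)\le P(\tau\le T)+\sum_{t\le T}\Pr(\sigma=t,\ X_{t-1}\in\mathcal N,\ \sigma>t-1)\le P(\tau\le T)+\sum_{t\le T}\varepsilon_t .
\]
One point needs checking: whether $\tau$ counts the step at which the process leaves $\mathcal N$ or the first state outside it. Either convention works, provided the maximal coupling is used only from states in $\mathcal N$.

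\emph{Attainment.} We show each term is sharp with its own example.
\begin{itemize}
\item For the $\varepsilon$ term, take $\mathcal N$ to be the whole space, so $\tau=\infty$. Let $P$ and $Q$ have kernels that, in fresh coordinates, place disjoint mass of size exactly $\varepsilon_t$ at each step. An example is an absorbing ``flag'' state entered with probability $\varepsilon_t$ under $Q$ only, at steps where the path has not yet been flagged. Take $A$ to be the event that the flag is reached by $T$. Then $P(A)-Q(A)$ equals the bound, up to the second-order terms $1-\prod(1-\varepsilon_t)$. If exact attainment of the sum is required, we instead route each step's discrepancy to distinct points of the state space. Since the discrepancies are then disjoint events, their masses add exactly, provided $\sum\varepsilon_t\le1$.
\item For the $P(\tau\le T)$ term, set all $\varepsilon_t=0$. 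Let $P$ exit $\mathcal N$ with probability $p$ and then go to a state $a$, while $Q$ sends the exited path to a different state $b$. This is allowed because the kernels are only required to agree inside $\mathcal N$. With $A=\{\text{reach } a\}$ we get $|P(A)-Q(A)|=p=P(\tau\le T)$.
\end{itemize}

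\emph{Main obstacle.} The delicate step is the attainment claim. Read literally, ``both terms are attained'' asks for a single example in which equality holds with both terms positive. We handle this by combining the two constructions on disjoint parts of the state space, taking $\varepsilon_t$ and $p$ small enough that the total stays at most $1$. The measurable-selection detail in the construction is routine by comparison.
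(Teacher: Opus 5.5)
Your proof of the inequality is the paper's argument: a step-by-step maximal coupling used only while the chains share a state in $\mathcal N$, then a union bound over an exit before the split and a split from a shared state in $\mathcal N$. You write it out more carefully than the paper does. Your exit example is also correct, and it is what the paper's one-line tightness remark amounts to.

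The gap is in the attainment of $\sum_t\varepsilon_t$. Routing each step's discrepancy to distinct points cannot make the masses add. Take $\mathcal N$ to be the whole space. Your own coupling then gives $\Pr(\sigma>t\mid\sigma>t-1,X_{t-1})=1-\|P_t(X_{t-1},\cdot)-Q_t(X_{t-1},\cdot)\|_{TV}\ge1-\varepsilon_t$. So in every such system $|P(A)-Q(A)|\le\Pr(\sigma\le T)\le1-\prod_{t\le T}(1-\varepsilon_t)$, which is strictly below $\sum_t\varepsilon_t$ once two of the $\varepsilon_t$ are positive. The reason is that $Q$ can divert only conditional mass $\varepsilon_t$ from a shared state, the shared mass at time $t-1$ is already below one, and discrepancies launched from states that only $Q$ visits add nothing.

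Exits do not rescue this. In your decomposition $\Pr(\sigma=t,\ \tau\ge t)\le\varepsilon_t\Pr(\sigma>t-1,\ \tau\ge t)$, so equality in the theorem forces $\Pr(\sigma>t-1,\ \tau\ge t)=1$ for every $t$ with $\varepsilon_t>0$. This fails at the later of two such steps, because the earlier step $t_1$ already splits the paths with probability $\varepsilon_{t_1}>0$. Your closing construction, which combines the two examples to obtain equality with both terms positive for given $\varepsilon_t$ and $p$, therefore claims something false. Making the parameters small only makes the deficit second order, not zero.

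What you can prove is that the exit term is attained exactly, and that equality with both terms positive holds when a single step carries the discrepancy. For instance, take $T=2$ and start from $x_0$:
\begin{itemize}
\item both kernels move to a state $e\notin\mathcal N$ with probability $p$;
\item $P$ moves to an absorbing $g\in\mathcal N$ with probability $1-p$;
\item $Q$ moves to $g$ with probability $1-p-\varepsilon_1$ and to an absorbing flag $f$ with probability $\varepsilon_1$;
\item from $e$, $P$ goes to a new state $a$ and $Q$ to a new state $b$;
\item all other transitions are equal under $P$ and $Q$.
\end{itemize}
Then $\varepsilon_2=0$ and $P(\tau\le 2)=p$, and $A=\{X_1=g\}\cup\{X_2=a\}$ gives $P(A)-Q(A)=p+\varepsilon_1$. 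For several positive $\varepsilon_t$, the $\varepsilon$ term is sharp only to first order, so state the attainment claim in that weaker form. The paper's tightness sentence addresses only the exit term and does not support more.
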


\begin{theorem}[Non-identifiability]
\label{thm:nonid}
For any neighborhood $\mathcal N$, order $d$, and $p \in (0,1]$, there exist
smooth systems that agree on $\mathcal N$ in value, in all derivatives up to
order $d$, and in noise law, but whose probabilities of an event determined
at step 2 differ by $p$.
\end{theorem}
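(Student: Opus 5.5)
The plan is an explicit construction. The key observation is that agreement on $\mathcal N$ constrains only what happens while the path stays in $\mathcal N$. Two systems can therefore be identical inside $\mathcal N$, in value, in every derivative and in noise law, and still differ arbitrarily outside it. The step-2 event is arranged to depend on the dynamics at a state the process reaches outside $\mathcal N$ after one step.

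\textbf{Setup.} Work in one dimension. Take $\mathcal N \supseteq (-r, r)$ with $r>0$ small; shrinking $\mathcal N$ only weakens the agreement requirement, so it suffices to treat a bounded open interval containing the starting state $x_0=0$. Both systems use the same Markov dynamics
\[
x_{t+1} = f(x_t) + \xi_t ,
\]
with i.i.d.\ noise $\xi_t$ of a common law. The two systems differ only in $f$:
\begin{itemize}
\item the first system uses $f_1(x) = x + c$ with a constant $c$ chosen so that $c \notin \overline{\mathcal N}$ is impossible to avoid only if $\mathcal N$ is unbounded (see the last paragraph);
\item the second system uses $f_2 = f_1 + h$, where $h$ is a smooth bump supported away from $\overline{\mathcal N}$, near the point $c$.
\end{itemize}
Since $h$ vanishes together with all its derivatives on a neighborhood of $\overline{\mathcal N}$, the two systems agree on $\mathcal N$ in value, in all derivatives up to any order $d$, and in noise law.

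\textbf{Driving the first step outside $\mathcal N$.} Take the noise $\xi_t$ to be smooth with small support, of width $\delta \ll r$. Then
\[
x_1 = c + \xi_0 \in (c-\delta, c+\delta),
\]
which lies outside $\mathcal N$ almost surely.

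\textbf{Separating the step-2 event.} Choose $h$ equal to a constant $L$ on $(c-\delta, c+\delta)$, with $L$ large. Then $x_2$ is concentrated near $2c$ in the first system and near $2c+L$ in the second. Let $A = \{x_2 > c + L/2\}$. This gives $P_1(A)=0$ and $P_2(A)=1$, so the two probabilities differ by $1$.

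\textbf{Getting an arbitrary difference $p\in(0,1]$.} Randomize the first step with a two-component noise, still a common smooth law: with probability $p$ the noise takes $x_1$ into $\operatorname{supp} h$, and with probability $1-p$ it keeps $x_1$ in a region where $h=0$ but still outside $\mathcal N$. Alternatively, make $h$ act only on part of the first-step support. Then $P_2(A) - P_1(A) = p$ exactly. By Theorem~\ref{thm:coupling}, this difference is at most $P(\tau\le T)$ here, since $\varepsilon_t = 0$. That is consistent, because the process leaves $\mathcal N$ with probability at least $p$; it also shows the theorem's exit term is attained.

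\textbf{Main obstacle.} The delicate step is ensuring that the process actually exits any prescribed neighborhood $\mathcal N$ while remaining smooth.
\begin{itemize}
\item If $\mathcal N$ is bounded, a large drift $c$ suffices.
\item If $\mathcal N$ is an arbitrary neighborhood, possibly large or unbounded, instead take the state space larger than $\mathcal N$. For example, add a coordinate on which $\mathcal N$'s constraint is irrelevant, or use the fact that a neighborhood is proper, and send the drift toward a point of its complement.
\item If $\mathcal N$ is the whole space, the statement fails, so the theorem implicitly requires a proper neighborhood. The write-up should state this assumption.
\end{itemize}
A smooth, compactly supported noise density can then carry mass $p$ into the complement. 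This keeps every object $C^\infty$, as the statement requires.
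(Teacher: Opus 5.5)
Your construction is the paper's proof made explicit: a drift perturbation supported off $\overline{\mathcal N}$ (your $h$, the paper's $\lambda\phi$) leaves the drift, its derivatives and the kernels from points of $\mathcal N$ unchanged, and its amplitude routes the mass that leaves $\mathcal N$ at step 1 across or away from the step-2 event boundary, while your two-component noise supplies the ``exactly $p$'' that the paper only asserts. Two small fixes: the opening reduction runs backwards, since agreement on a larger set is the stronger requirement, so you may enlarge a bounded $\mathcal N$ to a bounded interval but not shrink it to $(-r,r)$; and the hypothesis actually needed, also tacit in the paper's choice of a nonzero bump $\phi$, is that $\mathcal N$ is not dense rather than merely proper, because smooth drifts that agree on a dense set agree everywhere, so send the drift to a point of the exterior of $\mathcal N$ instead of adding a coordinate.
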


Theorem~\ref{thm:coupling} separates the accumulated local error from the
probability of leaving the region in which the local model was checked; we
use it to structure the problem, because response correlations measure
accuracy along a few directions rather than a total-variation distance. In
Qwen3-0.6B, a single forgetting event needs a parameter change about
$18\times$ larger than the largest change at which the response model was
validated (Appendix~\ref{app:local}), and by Theorem~\ref{thm:nonid} what
happens outside $\mathcal N$ is unidentifiable from local data. We
therefore bound the probability that a linear surrogate of the margin first
crosses a boundary, a concentration problem, and test on new runs whether
the bound transfers to the model (proofs in Appendix~\ref{app:theory}).

\subsection{Two first-passage bounds}

Fix a protected fact and drop the index $i$. The linear surrogate applies the
first-order response model to the noise of the task-B updates:
$\widehat Y_t = m_t + \sum_{s \le t} \eta\, q_{t,s}^\top \xi_s$, where
$\xi_s$ is the clipped gradient at step $s$, projected on the basis and
centered at its conditional mean. A clipped gradient and its conditional
mean each have norm at most $G_0$, so $\|\xi_s\| \le 2G_0$. Let
$S_t = \sum_{s \le t} \eta\, q_{T,s}^\top \xi_s$ be the noise term with the
coefficients of the final time $T$. It is a martingale whose increments are
bounded by
\begin{equation}
c = 2\eta G_0 r, \qquad r = \max_s \|q_{T,s}\|.
\label{eq:c}
\end{equation}
Let $a = \min_{t \in G} m_t - B_0$ be the distance of the reference margin
from the boundary $B_0 = 0.01$. Let $K$ be the predictable quadratic
variation of $S_T$ estimated from the probes, and let $V = \kappa K$ with
$\kappa = 2$. Let $R = 2\eta G_0 \max_{t \in G}\sum_{s<t} w_s \|q_{t,s} -
q_{T,s}\|$ bound the change of the response coefficients over the grid,
where $w_s$ is the width of the grid interval that starts at $s$.

\begin{theorem}[First-passage bounds on the grid]
\label{thm:tu}
Let $a' = a - R > 0$. If the increments of $S$ are bounded by $c$, then
\[
\Pr\bigl(\exists t \in G: \widehat Y_t < B_0\bigr) \le
U^{\mathrm A} = \exp\!\Bigl(-\frac{a'^2}{2Tc^2}\Bigr).
\]
If, in addition, the predictable quadratic variation of $S$ is at most $V$,
the same probability is at most
$U^{\mathrm F} = \exp\!\bigl(-a'^2/(2(V + c\,a'/3))\bigr)$.
\end{theorem}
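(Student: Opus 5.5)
The plan is to reduce the first-passage event for $\widehat Y$ to a one-sided maximal deviation event for the martingale $S$, and then apply the maximal forms of the Azuma and Freedman inequalities to $-S$.

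\emph{Reduction.} Write $\widehat Y_t = m_t + S_t + D_t$, where $D_t = \sum_{s\le t}\eta\,(q_{t,s}-q_{T,s})^\top \xi_s$ is the part of the noise carried by the change in coefficients. Since $\|\xi_s\|\le 2G_0$, Cauchy--Schwarz gives
\[
|D_t| \le 2\eta G_0\sum_{s\le t}\|q_{t,s}-q_{T,s}\|.
\]
Here the sum runs over SGD steps. Because the coefficients are probed only at grid times, we interpret $q_{t,s}$ as piecewise constant on grid intervals. Then the sum over the steps in the interval beginning at grid point $s$ contributes $w_s\|q_{t,s}-q_{T,s}\|$, and the interval ending at $t$ contributes nothing beyond $s<t$, since the term at $s=t$ has zero weight before $t$. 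Hence $\max_{t\in G}|D_t|\le R$.

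On the event $\{\exists t\in G:\widehat Y_t<B_0\}$ we also have $m_t-B_0\ge a$ for every $t\in G$. It follows that
\[
S_t < B_0 - m_t - D_t \le -a + R = -a'
\]
for some $t\le T$. The event is therefore contained in $\{\min_{t\le T} S_t \le -a'\}$, that is, in $\{\max_{t\le T}(-S_t)\ge a'\}$. Enlarging from the grid $G$ to all $t\le T$ only enlarges the event, so this step is safe.

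\emph{Azuma.} The process $-S$ is a martingale with $-S_0=0$ and increments bounded by $c$. For fixed $\lambda>0$, $\exp(\lambda(-S_t))$ is a nonnegative submartingale. Hoeffding's lemma gives $E[e^{\lambda X}\mid\mathcal F]\le e^{\lambda^2c^2/2}$ for a centered increment $X$ with $|X|\le c$. Doob's maximal inequality then gives
\[
\Pr\Bigl(\max_{t\le T}(-S_t)\ge a'\Bigr) \le \exp\bigl(-\lambda a' + T\lambda^2c^2/2\bigr).
\]
Choosing $\lambda = a'/(Tc^2)$ yields $U^{\mathrm A}$.

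\emph{Freedman.} We invoke Freedman's inequality in its maximal form~\citep{freedman1975tail}. For a martingale with increments bounded above by $c$ and predictable quadratic variation $\langle S\rangle$,
\[
\Pr\bigl(\exists t\le T: -S_t\ge a',\ \langle S\rangle_t\le V\bigr) \le \exp\Bigl(-\frac{a'^2}{2(V+ca'/3)}\Bigr).
\]
The hypothesis $\langle S\rangle_T\le V$ holds almost surely and $\langle S\rangle_t$ is nondecreasing in $t$, so the constraint can be dropped, which gives $U^{\mathrm F}$. Freedman's result is stated for increments bounded above, and $|{-\Delta S}|\le c$ satisfies this.

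\emph{Main obstacle.} I expect the reduction step to be the delicate one, specifically the control of $D_t$ by $R$. The definition of $R$ weights the grid intervals by $w_s$ and uses $s<t$, so the bound requires that the step-level coefficients $q_{t,s'}$ for $s'$ in an interval be represented by the grid value at its left endpoint. This is a modeling convention of the surrogate rather than a derived fact. I will state it explicitly as part of the definition of $\widehat Y$, namely that the surrogate uses the piecewise-constant coefficients.

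Two further points need care. First, $S$ must be adapted and $\xi_s$ must have conditional mean zero given $\mathcal F_{s-1}$, which holds by construction because $\xi_s$ is centered at its conditional mean. Second, $a'>0$ is needed to choose $\lambda>0$.
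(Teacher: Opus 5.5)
Your proposal is correct and follows the paper's proof: the same split $\widehat Y_t - m_t = S_t + D_t$, the Cauchy--Schwarz bound $|D_t| \le R$, the inclusion of the first-passage event in $\{\exists u \le T: -S_u \ge a'\}$, and the maximal Azuma and Freedman inequalities. The differences are cosmetic: you derive maximal Azuma from Hoeffding's lemma and Doob's inequality rather than citing it, and you state explicitly the piecewise-constant convention for the coefficients, which the paper's weights $w_s$ leave implicit.
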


The proof writes $\widehat Y_t - m_t = S_t + D_t$ with $|D_t| \le R$ and
applies the maximal forms of the Azuma and Freedman inequalities, which
cover all grid times at once. We call $U^{\mathrm F}$ the Freedman
bound and $U^{\mathrm A}$ the Azuma bound. A bound \emph{certifies} an
eligible fact if its value is below 0.05; this is a statement about one
fact, and a statement about a set $\mathcal L$ of facts at level $\alpha$
requires $\sum_{i\in\mathcal L} U_i \le \alpha$. A forgetting event implies a
crossing of $B_0$, so each bound also bounds the probability of forgetting
under the surrogate; for the actual model, a certified fact is a prediction
that we test on new runs.

\paragraph{The evaluated bounds.}
The \emph{complete} Freedman bound uses the measured $R$. The
\emph{simplified} Freedman bound and the Azuma bound, as we evaluate it, set
$R = 0$, so Theorem~\ref{thm:tu} covers them for the surrogate $m_t + S_t$
with final-time coefficients, which differs from $\widehat Y_t$ by up to
$R$; they are justified only for facts whose coefficient change is small
compared with $a$. The Azuma bound replaces the variance estimate by the
worst case $Tc^2$ and is much more conservative.

\subsection{Assumptions of each bound}

\begin{assumption}[Uniform remainder]\label{as:local}
With probability at least $1-\delta_r$,
$\sup_{t\in G}|Y_t-\widehat Y_t| \le \gamma$, where $Y_t$ is the actual
margin. Under this assumption, each bound applies to the actual margin with
$a' - \gamma$ in place of $a'$, plus $\delta_r$.
\end{assumption}
\begin{assumption}[Bounded increments]\label{as:inc}
The increments of $S$ are bounded by $c$ in Equation~\ref{eq:c}. Both bounds
need this assumption, and gradient clipping enforces it.
\end{assumption}
\begin{assumption}[Variance estimate]\label{as:var}
$V$ is at least the realized predictable quadratic variation of $S$. Only
the Freedman bounds need this assumption.
\end{assumption}

Assumption~\ref{as:local} remains unverified, and a high response
correlation is insufficient to establish it. Whether the bounds hold for the
actual model is therefore an empirical question, which we test on new runs;
Section~\ref{sec:evidence} shows on which facts the assumptions fail.
Estimation error in $V$ can be absorbed at a stated confidence level
(Appendix~\ref{app:theory}).

\section{Rules that certify facts}
\label{sec:method}

A rule computes a value $U_i$ for each fact before the run and certifies
the eligible facts with $U_i < 0.05$; each bound of
Section~\ref{sec:theory} is such a rule. Before we understood where the
simplified bound fails, we designed on earlier conditions two rules that
keep part of its certified facts, and one baseline that uses only the
reference run. The screen threshold was fixed before the prospective
studies, and the other two thresholds after the first prospective condition
had finished (Section~\ref{sec:prospective}).

\paragraph{Screened bound.}
The first rule decides for a whole condition. Let $\mathcal C$ be the set of
eligible facts that the simplified bound certifies, let
$z_i = a_i/\sqrt{V_i}$ be the standardized margin of fact $i$, and let
$\tilde z$ be the median of $z_i$ over $\mathcal C$. The \emph{screen} is
the test $\tilde z \ge 30$. If the condition passes the screen, the
\emph{screened bound}\footnote{Our code calls this rule ValidityShield.} uses the simplified Freedman bound for every fact of the
condition (the \emph{Freedman branch}); otherwise it uses the Azuma bound
(the \emph{Azuma branch}), on the rationale that large standardized margins
tolerate an underestimated variance. Because the screen uses only
quantities that exist before the run, the selected bound holds whenever
each branch bound holds (Proposition~\ref{prop:select}).

\paragraph{Per-fact rule and margin threshold.}
The \emph{per-fact rule} decides for each fact. It certifies a fact if the
simplified bound certifies it and its own $z_i$ is at least
$\theta_z = \PerFactTheta$. The \emph{margin threshold} is a baseline that
uses only the reference run. It certifies a fact if its distance $a_i$ is at
least $\theta = \MarginTheta$, and it claims a probability of 0.05 for that
fact. Both thresholds are the smallest values at which every earlier
condition passes the test below, and the screen threshold comes from an
earlier screening rule (Appendix~\ref{app:screen}).

\paragraph{Testing a rule.}
For each certified fact, a one-sided binomial test compares the number of
test runs in which the fact is forgotten with its value $U_i$ (with 0.05 for
the margin threshold), with a Bonferroni correction over the 64 facts of the
condition. A rule is \emph{violated on a fact} if this test is significant,
and a condition is \emph{violating} if the rule is violated on at least one
of its facts. A condition is \emph{evaluable} if it has at least 8 eligible
facts.

\section{Where the simplified bound fails}
\label{sec:evidence}

We first examine the \EarNConds{} of the \HistN{} earlier conditions that
have stored probe files, from which $R$ can be computed
(Appendix~\ref{app:history}); they contain \EarNFacts{} eligible facts.
Their outcomes existed before we defined the rules of
Section~\ref{sec:method}, so this analysis is post hoc (first row of
Table~\ref{tab:main}).

\begin{figure}[t]
\centering
\includegraphics[width=\textwidth]{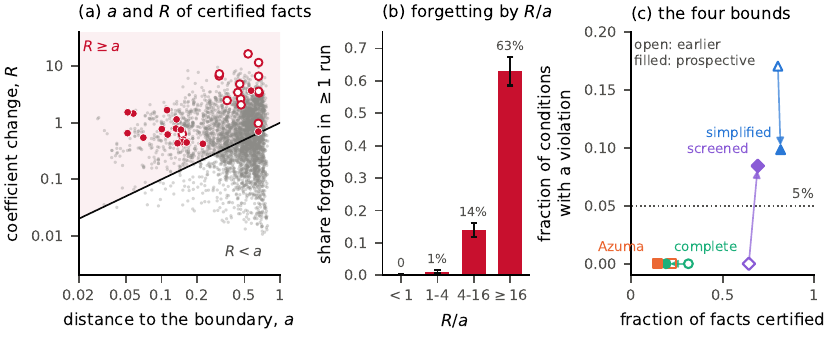}
\caption{\textbf{(a)} The \AllCertTerminalN{} facts that the simplified
Freedman bound certifies, by their distance $a$ to the boundary and
coefficient change $R$, both computed before the run. Red: facts on which
the simplified bound was violated, in earlier (open) and prospective
(filled) conditions; all lie in the shaded region $R \ge a$, which the
complete bound excludes. \textbf{(b)} Share of eligible facts forgotten in
at least one test run, by $R/a$, with Wilson 95\% intervals
(Appendix~\ref{app:ratio}). \textbf{(c)} For the four bounds, the fraction
of eligible facts certified against the fraction of conditions with a
violation, on the \EarNConds{} earlier (open, post hoc) and
\AllproNConds{} prospective (filled) conditions.}
\label{fig:main}
\end{figure}

The simplified Freedman bound certified \EarLicTerminal{} of eligible
facts, but it was violated in \EarFailTerminal{} of the \EarNConds{}
conditions. Figure~\ref{fig:main}a shows where the violations occurred. Each
of the \EarTerminalFacts{} facts on which it was violated in these
conditions lies above the line $R = a$: its response coefficients changed
during the run by more than the distance of its margin to the boundary. The
complete bound replaces $a$ by $a - R$, so it certifies only facts with
$R < a$, and it held in every condition.

In Theorem~\ref{thm:tu}, $R$ bounds the difference between the surrogates
with final-time and with per-time coefficients, but the failures have a
different source. After these outcomes were known, we evaluated a
\emph{per-time bound}, which applies Theorem~\ref{thm:tu} at each grid time
with the coefficients of that time and combines the grid times with a union
bound, so that the term $R$ drops out. Over all \AllNConds{} conditions,
including the prospective ones of Section~\ref{sec:prospective}, it certified
\AllLicPerTime{} of eligible facts but was violated in \AllFailPerTime{}
conditions, and each of the \AllPerTimeFacts{} facts on which it was
violated had $R \ge a$. On such
facts, the first-order model fitted along the reference run describes the
test runs poorly: on the facts that violated the simplified bound, the
standard deviation of the final margin across test runs was a median of
\SpreadViolated{} times the prediction, against \SpreadStable{} on facts
with $R < a$ (Appendix~\ref{app:spread}). A large $R$ means that the
response of the margin to a perturbation changes strongly during the run,
so a model fitted along one run is least likely to describe other runs. We
therefore read $R \ge a$, computed before the run, as a necessary condition
for such a failure in our data. It is far from sufficient, because the
median ratio over all facts with $R \ge a$ is \SpreadUnstable, and
forgetting rises steadily with $R/a$ above 1 (Appendix~\ref{app:ratio}).
The complete bound holds because it excludes every fact that meets the
condition.

The complete bound therefore certifies fewer facts. In our data, $R \ge a$
was necessary for a violation but far from sufficient: \EarCertRgeFrac{} of
the facts that the simplified bound certified had $R \ge a$, and most of
them survived every test run. The complete bound excludes all of them and
certified \EarLicComplete{} of eligible facts, more than the Azuma bound
(\EarLicAzuma).

The screened bound and the per-fact rule appeared to avoid this loss. Every
earlier condition with a violation of the simplified bound had a median
standardized margin below 30 (at most \HistMaxFailZ;
Appendix~\ref{app:screen}), so the screened bound held there and certified
\EarLicScreened{} of eligible facts; the per-fact rule (\EarLicPerfact) and
the margin threshold held by construction. Only new starting states can
test these rules.

\section{Prospective evaluation}
\label{sec:prospective}

\subsection{Design}

We ran four prospective studies on new starting states, each fixed with
its analysis script before its first run. The \emph{main study} has
\SNTotal{} conditions in six condition sets: Qwen3-0.6B, Llama-3.2-1B,
Qwen3-1.7B and Qwen3-8B on synthetic facts, and Qwen3-0.6B and Llama-3.2-1B
on CounterFact facts that the pretrained model already answers correctly,
with new seeds (71, 73 and 79) and learning rates that include rates at
which the simplified bound had failed before. The \emph{horizon study}
reruns eight of its starting states for $T = 80$ steps, and the \emph{14B
study} runs Qwen3-14B with new seeds. Four Qwen3-8B and Qwen3-14B
conditions fell below the task-A accuracy threshold and were stopped before
any bound was computed; a further batch with two new seeds was run for each
model, and the Qwen3-8B row of Table~\ref{tab:main} consists of this batch.
Before the first test run of each condition, our code wrote the probe
results, the simplified, Azuma and screened bounds and the branch to disk;
the other rules are deterministic functions of these files.

The main question was the screened bound, judged by three criteria. The
\emph{violation criterion} allows at most the 95th percentile of
$\mathrm{Binomial}(N, 0.05)$ violating conditions among $N$ evaluable ones,
a count that a rule violating each condition with probability 0.05 exceeds
with probability at most 0.05. The \emph{certification criterion} requires
that the screened bound certify at least 40\% of eligible facts and twice
as many as the Azuma bound, and the \emph{assignment criterion} that every
condition in which the simplified bound fails be assigned to the Azuma
branch. The per-fact rule and the margin threshold were added after the
first condition had finished.

After an interim look at 15 conditions, we added a confirmatory test on the
conditions whose test runs started later (28 runs excluded): the complete
bound must meet the violation criterion, and the \emph{location criterion}
requires that at least 90\% of the facts on which the simplified bound is
violated have $R \ge a$. Because few such facts occurred, a \emph{second
confirmatory study} applied the same criteria, with at least five violated
facts needed for a decision, to 24 new conditions: Qwen3-0.6B at learning
rates 0.012, 0.018 and 0.021 and Qwen3-1.7B at 0.012 on synthetic facts,
with six new seeds.

\begin{table}[t]
\centering\small
\caption{Main results. For each condition set and rule: the number of
violating conditions (viol.) and the percentage of eligible facts that the
rule certifies (\%). $n$ is the number of evaluable conditions, and events
is the number of forgetting events of eligible facts over all test runs. The
rows of condition sets contain the main, horizon and 14B studies; the
Qwen3-8B row contains the added batch (seeds 79 and 83). The confirmatory
set contains the conditions of these studies whose test runs started after
the confirmatory test was defined.}
\label{tab:main}
\vspace{2pt}
\setlength{\tabcolsep}{2.2pt}
\begin{tabular}{lrrrrrrrrrrrrrr}
\toprule
& & & \multicolumn{2}{c}{Simplified} & \multicolumn{2}{c}{Screened} & \multicolumn{2}{c}{Complete} & \multicolumn{2}{c}{Azuma} & \multicolumn{2}{c}{Per-fact} & \multicolumn{2}{c}{Margin} \\
& & & \multicolumn{2}{c}{Freedman} & \multicolumn{2}{c}{bound} & \multicolumn{2}{c}{Freedman} & \multicolumn{2}{c}{bound} & \multicolumn{2}{c}{rule} & \multicolumn{2}{c}{threshold} \\
\cmidrule(lr){4-5}\cmidrule(lr){6-7}\cmidrule(lr){8-9}\cmidrule(lr){10-11}\cmidrule(lr){12-13}\cmidrule(lr){14-15}
Condition set & $n$ & events & viol. & \% & viol. & \% & viol. & \% & viol. & \% & viol. & \% & viol. & \% \\
\midrule
Earlier conditions (post hoc) & 41 & 1{,}795 & 7 & 80 & 0 & 64 & 0 & 31 & 0 & 22 & 0 & 37 & 0 & 14 \\
\midrule
Qwen3-0.6B{,} synthetic & 9 & 603 & 2 & 85 & 2 & 85 & 0 & 29 & 0 & 8 & 0 & 53 & 0 & 6 \\
Llama-3.2-1B{,} synthetic & 6 & 319 & 1 & 71 & 0 & 47 & 0 & 18 & 0 & 18 & 0 & 21 & 0 & 29 \\
Qwen3-1.7B{,} synthetic & 4 & 337 & 1 & 87 & 1 & 49 & 0 & 6 & 0 & 12 & 0 & 11 & 0 & 6 \\
Qwen3-8B{,} synthetic & 2 & 8 & 0 & 94 & 0 & 94 & 0 & 37 & 0 & 41 & 0 & 38 & 0 & 9 \\
Qwen3-0.6B{,} CounterFact & 6 & 279 & 0 & 74 & 0 & 58 & 0 & 3 & 0 & 10 & 0 & 27 & 0 & 0 \\
Llama-3.2-1B{,} CounterFact & 6 & 215 & 2 & 80 & 2 & 80 & 0 & 10 & 0 & 13 & 0 & 28 & 0 & 0 \\
Qwen{,} Llama{,} $T = 80$ & 8 & 55 & 0 & 94 & 0 & 78 & 0 & 18 & 0 & 10 & 0 & 44 & 0 & 21 \\
Qwen3-14B{,} synthetic & 6 & 144 & 0 & 89 & 0 & 66 & 0 & 39 & 0 & 32 & 0 & 33 & 0 & 21 \\
\midrule
Confirmatory set & 19 & 785 & 2 & 80 & 2 & 62 & 0 & 15 & 0 & 12 & 0 & 27 & 0 & 3 \\
Second confirmatory study & 24 & 468 & 1 & 77 & 1 & 69 & 0 & 17 & 0 & 10 & 1 & 40 & 0 & 13 \\
All prospective & 71 & 2{,}428 & 7 & 82 & 6 & 69 & 0 & 19 & 0 & 14 & 1 & 35 & 0 & 12 \\
\bottomrule
\end{tabular}

\end{table}

\subsection{Results}

The screened bound failed on new starting states. In the \SNEval{}
evaluable conditions of the main study, it had \SShieldFailK{} violating
conditions (at most \SThresholdQ{} allowed), because the screen assigned
conditions with $\tilde z$ between \SScreenFailZLo{} and \SScreenFailZHi{}
to the Freedman branch, where the simplified bound was violated; it
\SEOne{} its violation criterion and \SEThree{} its assignment criterion. It
\SETwo{} the certification criterion, and in the horizon and 14B studies,
whose conditions contained fewer forgetting events, the violation criterion
was \HorizonBigCriterion{} (family-wise results in
Appendix~\ref{app:study20}).

The failures on new starting states had the same form as the earlier ones.
Across the \ProNConds{} conditions of the main, horizon and 14B studies and
the added Qwen3-8B batch,
every one of the \ProTerminalFacts{} facts on which the simplified bound was
violated had $R \ge a$, and the complete bound, the Azuma bound, the
per-fact rule and the margin threshold held in every condition
(Table~\ref{tab:main}). On the \TOneN{} conditions of the confirmatory test,
whose outcomes were unknown when we defined it, the complete bound had
\TOneK{} violating conditions, so it \TOneResult{} the violation criterion,
and \TTwoRge{} of the \TTwoFacts{} facts that violated the simplified bound
had $R \ge a$, so the location criterion was \TTwoResult{} on few facts.

This pattern explains the failure of the screened bound: a condition with a
large median $\tilde z$ can still contain a few facts with $R \ge a$, to
which the Freedman branch applies the simplified bound.

Over these studies, the per-fact rule certified \ProLicPerfact{} of
eligible facts, the complete bound \ProLicComplete, the Azuma bound
\ProLicAzuma{} and the margin threshold \ProLicMargin, but the per-fact rule
relies on a threshold chosen on earlier data. The updates learn task B: in
the main study, task-B accuracy rose by \STaskBGain{} percentage points on
average.

\subsection{Second confirmatory study}
\label{sec:rich}

All \RichN{} conditions were evaluable, but the new seeds gave fewer
forgetting events than earlier seeds at the same settings: \RichEvents{} in
total, and zero in \RichZeroEvent{} conditions (Table~\ref{tab:study25}). The simplified bound was violated in
\RichFailTerminal{} condition, on \RichLocFacts{} fact with $R \ge a$, so
the location criterion, which needs at least five violated facts, was
\RichLocResult. The complete bound had \RichValK{}
violating conditions (allowance \RichValQ), so it \RichValResult{} the
violation criterion. The Azuma bound, the per-time bound and the margin
threshold also held in every condition, and the per-fact rule was violated
on the same fact as the simplified bound. Restricted to facts with $R < a$,
the simplified bound held in every condition and certified
\RichLicTerminalStable{} of eligible facts, against \RichLicComplete{}
(complete) and \RichLicAzuma{} (Azuma).

\subsection{Pooled forgetting counts}
\label{sec:pooled}

\begin{table}[t]
\centering\small
\caption{Forgetting events pooled over all \AllNConds{} conditions (post
hoc). For each rule: the percentage of eligible facts certified, the number
of conditions with a violation, the pairs of a certified fact and a test
run, the forgetting events among them, and the events that the values of
the rule allow. The last two rows certify only facts with $R < a$.}
\label{tab:pooled}
\setlength{\tabcolsep}{4pt}
\begin{tabular}{lrrrrr}
\toprule
Rule & certified (\%) & viol.\ cond. & fact-runs & events & allowed \\
\midrule
Simplified Freedman ($R = 0$) & 81.0 & 14 & 118{,}928 & 344 & 467.5 \\
Screened bound & 67.2 & 6 & 98{,}480 & 202 & 314.8 \\
Per-time Freedman & 74.5 & 5 & 109{,}264 & 188 & 513.4 \\
Complete Freedman & 23.8 & 0 & 34{,}256 & 0 & 130.8 \\
Azuma & 17.3 & 0 & 24{,}688 & 0 & 178.8 \\
\midrule
Simplified Freedman on facts with $R < a$ & 31.0 & 0 & 44{,}912 & 0 & 29.8 \\
Every fact with $R < a$ (value 0.05) & 31.0 & 0 & 44{,}976 & 0 & 2248.8 \\
\bottomrule
\end{tabular}

\end{table}

A single condition detects only large excesses over a bound
(Appendix~\ref{app:power}), so Table~\ref{tab:pooled} pools the test runs of
all conditions. The facts that the complete bound certifies were forgotten
\PooledCompleteEvents{} times in \PooledCompleteRuns{} pairs of a fact and
a test run, against \PooledCompleteAllowed{} events that its values allow,
and the facts that the Azuma bound certifies \PooledAzumaEvents{} times.
The simplified bound produced fewer events than it allows in total, but its
violations lie in $R \ge a$. The last two rows separate the two parts of
the complete bound: the eligible facts with $R < a$ were forgotten
\PooledROnlyEvents{} times in \PooledROnlyRuns{} pairs, a result that
depends on $R$ and $a$ alone. The exclusion of
$R \ge a$, more than the variance estimate, therefore makes the complete
bound valid.

\section{Discussion and limitations}
\label{sec:discussion}

\paragraph{What the results support.}
Probes at the start of a LoRA fine-tuning run with SGD give per-fact bounds
whose formal guarantee covers a linear surrogate and which held for the
actual model in our conditions. The simplified Freedman bound certifies most
facts, but it failed only on facts whose coefficient change $R$ exceeds
their distance $a$ to the boundary, and these failures are breakdowns of
the first-order model. The complete bound, which excludes these facts, and
the more
conservative Azuma bound held in all \AllNConds{} conditions, including the
\ConfirmN{} confirmatory ones; the facts that the complete bound certifies
were forgotten \PooledCompleteEvents{} times in \PooledCompleteRuns{} pooled
pairs of a fact and a test run (Table~\ref{tab:pooled}). The
location of the failures rests mainly on post hoc data, because the
confirmatory studies contained only \ConfirmFacts{} violated facts. The
screened bound shows that a rule designed on earlier conditions can hold on
them and fail on new starting states, so only new starting states can test
it.

\paragraph{When the probes are worth their cost.}
A probability matters when the same update is run many times from one
starting state, for example when a shared model is adapted for many users.
A simulation then needs \SimRuns{} surviving runs to certify a fact at level
0.05 with 95\% confidence, whereas the probes cost a median of
\SProbeRuns{} test runs at $T = 40$ and \TProbeRuns{} at $T = 80$.

\paragraph{Limitations.}
All bounds hold for a linear surrogate, and the remainder that would
transfer them to the actual margin remains unquantified, so their coverage
on the actual model is an empirical finding; the variance estimate is also
low for most facts (Appendix~\ref{app:spread}). The complete bound excludes
every fact with $R \ge a$, although most of these facts survived every test
run. Certifying them needs a model that remains accurate along the test
runs, because the per-time bound failed on such facts. With 32 test runs
and a Bonferroni correction over 64 facts, a fact with bound 0.05 is flagged
only if it is forgotten in at least \PowerKFive{} runs, so a single
condition rules out only large excesses over a bound, and the pooled counts
carry the evidence about smaller ones. Our results cover SGD at
horizons of 40 and 80 steps with 64 synthetic or CounterFact facts per
condition; Adam and full fine-tuning remain open.

\section{Conclusion}
\label{sec:conclusion}

A first-order model of fine-tuning can be accurate and still fail to predict a
full run, yet it can bound the probability that a margin first falls below a
boundary near zero. The simplified Freedman bound failed only on facts with
$R \ge a$, a condition known before the run that every observed breakdown
of the first-order model met. The complete bound, which excludes these facts, and the Azuma
bound held in all \AllNConds{} conditions, whereas a screen that judged
whole conditions failed on new starting states. Bounds on forgetting should
therefore be computed for each starting state and each fact.

\ifdefined\arxivcopy\else
\section*{Acknowledgments}
Omitted for anonymous review.
\fi

\section*{Ethics statement}
This work uses only public model checkpoints and synthetic or public factual
data, and all experiments are computational. The main
ethical risk is over-trust: a bound on forgetting can be quoted outside the
assumptions under which it holds. A bound that certifies which knowledge
survives an update could also be used to protect content that should be
removed. We report the facts on which the simplified Freedman bound fails,
and we state the scope of every claim.

\section*{AI use statement}
We used large language models to help write and proofread the text. The
authors take full responsibility for the content.

\ifdefined\arxivcopy\else
\section*{Reproducibility statement}
The supplementary material contains the code, all preregistrations, the
frozen analysis scripts, the run directories with prediction and margin
files, and the script that generates the results, tables and figures of this
paper from those directories. Appendix~\ref{app:seeds} reports a rebuild of
two conditions from an empty copy of the repository; one of them gave a
different starting state, although the configuration, data, package
versions, model revision and GPU were the same.
\fi

\bibliographystyle{iclr2027_conference}
\bibliography{refs}
\appendix
\section{Proofs}
\label{app:theory}

\subsection{Theorem~\ref{thm:coupling} (coupling bound)}
Couple $P$ and $Q$ step by step. At each step where both chains are in
$\mathcal N$ and are still coupled, the kernels differ by at most
$\varepsilon_t$ in total variation, so a maximal coupling keeps them equal
except on an event of probability $\varepsilon_t$. Decoupling by time $T$
therefore has probability at most
$P(\tau \le T) + \sum_{t \le T}\varepsilon_t$, and on the complement the
chains agree on every path event. For tightness, take $A$ to be the
decoupling event in a system where leaving $\mathcal N$ determines the
outcome. \qed

\subsection{Theorem~\ref{thm:nonid} (non-identifiability)}
Fix a smooth bump function $\phi$ that vanishes on $\mathcal N$ together
with all its derivatives on the boundary of $\mathcal N$. Two drift fields
that agree on $\mathcal N$ and differ by $\lambda\phi$ outside it give
identical transition kernels for any path that stays in $\mathcal N$,
identical derivatives of all orders at every point of $\mathcal N$, and the
same noise law. Outside $\mathcal N$ the field is unconstrained, so
$\lambda$ can route the mass that has left $\mathcal N$ either across the
event boundary at step 2 or away from it, which moves the event probability by
exactly $p$. A predictor that sees only $\mathcal N$ has the same
distribution under both systems and errs by at least $p/2$ on one of them.
\qed

\subsection{Theorem~\ref{thm:tu} (first-passage bounds on the grid)}
Write $\widehat Y_t - m_t = S_t + D_t$ with
$S_t = \sum_{s\le t}\eta\, q_{T,s}^\top \xi_s$ and
$D_t = \sum_{s\le t}\eta\,(q_{t,s}-q_{T,s})^\top \xi_s$.

\emph{(i) $(S_u)_u$ is a martingale for
$\mathcal{F}_u = \sigma(\xi_1,\dots,\xi_u)$.} Each $q_{T,s}$ is
$\mathcal{F}_0$-measurable, because the response coefficients are
estimated by finite differences at the starting state and are fixed before
the test run starts, and $\mathbb{E}[\xi_s \mid \mathcal{F}_{s-1}] = 0$
because $\xi_s$ is the applied clipped update centered at its conditional
mean. Since $\|\xi_s\| \le 2G_0$, every increment satisfies
$|\eta\, q_{T,s}^\top \xi_s| \le 2\eta G_0 r = c$.

\emph{(ii) $|D_t| \le R$ almost surely.} By the Cauchy--Schwarz inequality,
$|D_t| \le \sum_{s\le t}\eta\lVert q_{t,s}-q_{T,s}\rVert\lVert\xi_s\rVert
\le 2\eta G_0 \max_{t\in G}\sum_{s< t} w_s \lVert q_{t,s}-q_{T,s}\rVert = R$,
with the interval widths $w_s$ of the probe grid.

\emph{(iii) Event inclusion.} If $\widehat Y_t < B_0$ for some $t \in G$, then
$S_t + D_t < B_0 - m_t \le -a$, so $-S_t > a - R = a'$. The first-passage
event is contained in $\{\exists u \le T : -S_u \ge a'\}$.

\emph{(iv) Maximal inequalities.} The maximal Azuma inequality for a
martingale with increments bounded by $c$ gives
$\Pr(\exists u \le T: -S_u \ge a') \le \exp\{-a'^2/(2Tc^2)\}$. If in addition
$\langle -S\rangle_T \le V$, the maximal Freedman inequality
\citep{freedman1975tail} gives the bound $\exp\{-a'^2/(2(V + c\,a'/3))\}$.
The maximal inequalities cover all grid times at once. For the surrogate with final-time
coefficients, $m_t + S_t$, we have $D_t = 0$, so the bounds hold with
$R = 0$. \qed

\subsection{Estimated variance (Section~\ref{sec:theory})}
Condition on $\mathcal F_0$, which fixes $\widehat V$ and the other bound
parameters, and partition on
$\Omega = \{\langle -S\rangle_T \le \widehat{V}\}$. The maximal Freedman
inequality bounds the probability of a crossing on $\Omega$ by
$\exp\{-a'^2/(2(\widehat{V}+c\,a'/3))\}$, and
$\Pr(\Omega^c\mid\mathcal F_0) \le \delta$ by assumption. An estimate
$\widehat V$ that depends on the outcomes of the test runs requires a
separate argument. \qed

\subsection{Selection before the run}
Let $\mathcal F_0$ be the information generated by the probes; the screen is
a deterministic function of $\mathcal F_0$.

\begin{proposition}[Selection before the run]
\label{prop:select}
Let $B \in \{\mathrm F, \mathrm A\}$ be the selected branch and let
$\mathcal L$ be any $\mathcal F_0$-measurable set of certified facts.
Suppose that on the event $\{B = b\}$, each branch bound satisfies
$\Pr(E_i \mid \mathcal F_0) \le U_i^b$, where $E_i$ is the forgetting event
of fact $i$. Then
$\Pr\bigl(\bigcup_{i \in \mathcal L} E_i\bigr) \le
\mathbb E\bigl[\sum_{i\in\mathcal L} U_i^{B}\bigr]$.
\end{proposition}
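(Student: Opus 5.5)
The argument conditions on the probe information $\mathcal F_0$, applies a union bound there, and then removes the conditioning with the tower property. The point is that both the branch $B$ and the set $\mathcal L$ are fixed before the run, so no selection effect arises.

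First, note the measurability facts. The screen is a deterministic function of $\mathcal F_0$, so $B$ is $\mathcal F_0$-measurable and the events $\{B=\mathrm F\}$ and $\{B=\mathrm A\}$ lie in $\mathcal F_0$. The values $U_i^{\mathrm F}$ and $U_i^{\mathrm A}$ are computed from the probes and are $\mathcal F_0$-measurable. It follows that $U_i^{B} = U_i^{\mathrm F}\mathbf 1\{B=\mathrm F\} + U_i^{\mathrm A}\mathbf 1\{B=\mathrm A\}$ is $\mathcal F_0$-measurable. By hypothesis, $\mathcal L$ is $\mathcal F_0$-measurable too. I will read this as saying that each indicator $\mathbf 1\{i\in\mathcal L\}$ is $\mathcal F_0$-measurable, with $i$ ranging over the 64 facts.

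Second, apply a conditional union bound. Writing $\mathbf 1\{\bigcup_{i\in\mathcal L}E_i\} \le \sum_{i}\mathbf 1\{i\in\mathcal L\}\mathbf 1_{E_i}$ and taking conditional expectations gives
\[
\Pr\Bigl(\bigcup_{i\in\mathcal L}E_i \Bigm| \mathcal F_0\Bigr) \le \sum_{i}\mathbf 1\{i\in\mathcal L\}\Pr(E_i\mid\mathcal F_0),
\]
because the indicators of membership in $\mathcal L$ can be pulled out of the conditional expectation. Now split on the branch. On $\{B=b\}$, the hypothesis gives $\Pr(E_i\mid\mathcal F_0)\le U_i^b = U_i^{B}$. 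Since the two events $\{B=\mathrm F\}$ and $\{B=\mathrm A\}$ partition the sample space, the conditional probability is at most $\sum_{i\in\mathcal L}U_i^{B}$ almost surely.

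Third, take expectations of both sides. The tower property turns the left-hand side into $\Pr(\bigcup_{i\in\mathcal L}E_i)$, and the right-hand side becomes $\mathbb E[\sum_{i\in\mathcal L}U_i^B]$, which is the claim.

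The only delicate step is the second one, namely justifying that the hypothesis holds as a conditional inequality on $\{B=b\}$ with the branch chosen from the same information. If $B$ depended on the test-run outcomes, the branch values would not bound the conditional probabilities and the union bound would fail. That is exactly where the $\mathcal F_0$-measurability of the screen is used. I expect no further obstacle, since the fact set is finite and all the quantities involved are bounded, so the expectations exist.
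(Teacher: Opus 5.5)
Your proposal is correct and is essentially the paper's argument: a union bound inside the conditional probability given $\mathcal F_0$, the branch hypothesis applied on the $\mathcal F_0$-measurable events $\{B=b\}$, and the tower property to remove the conditioning. The only difference is order. The paper bounds each $\Pr\bigl((\bigcup_{i\in\mathcal L}E_i)\cap\{B=b\}\bigr)$ by $\mathbb E\bigl[\mathbf 1_{\{B=b\}}\sum_{i\in\mathcal L}U_i^b\bigr]$ and then sums over $b$, whereas you first bound the conditional probability by $\sum_{i\in\mathcal L}U_i^B$ almost surely and then take expectations.
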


The proposition leaves the validity of each branch open; it shows that
choosing the branch before the run keeps the guarantee of the selected
branch.

\emph{Proof.} Let $A_b = \{B = b\}$. Because $B$ and $\mathcal L$ are
$\mathcal F_0$-measurable,
$\Pr\bigl((\bigcup_{i\in\mathcal L} E_i) \cap A_b\bigr) =
\mathbb E\bigl[\mathbf 1_{A_b}\Pr(\bigcup_{i\in\mathcal L} E_i \mid \mathcal F_0)\bigr]
\le \mathbb E\bigl[\mathbf 1_{A_b}\sum_{i\in\mathcal L} U_i^b\bigr]$
by the union bound inside the conditional probability. Summing over $b$
gives the stated bound with $U_i^B$. The argument holds for any dependence
between facts and between branches. \qed

\paragraph{Numerical checks.}
A Monte Carlo experiment checks the maximal inequality on clipped
heteroscedastic and two-point martingales (largest ratio of empirical
frequency to bound: 0.354). An independent reimplementation recomputes every
stored bound (largest discrepancy $1.7\times10^{-16}$).

\section{Scope of the first-order response model}
\label{app:local}
\begin{table}[h]
\centering\small
\caption{First-order response model at task-B learning rate \CanonicalLr. A
validation run passes a preregistered threshold if the correlation $\rho$
between predicted and measured margin changes under held-out perturbations
is at least 0.90 and the slope lies in $[0.8, 1.2]$. Every run passes; $n$
is the number of runs, and ranges are over runs.}
\label{tab:local}
\begin{tabular}{llrcc}
\toprule
Checkpoint & Family & $n$ & $\rho$ & slope \\
\midrule
Qwen3-0.6B & Qwen & 4 & 0.985--0.994 & 0.901--1.047 \\
Llama-3.2-1B & Llama & 3 & 0.974--0.998 & 0.974--1.028 \\
Qwen3-1.7B & Qwen & 4 & 0.982--0.996 & 0.986--1.075 \\
Qwen3-8B & Qwen & 1 & 0.994 & 0.987 \\
Gemma-3-12B & Gemma & 2 & 0.992--0.997 & 1.003--1.010 \\
Qwen3-14B & Qwen & 1 & 0.998 & 1.014 \\
\bottomrule
\end{tabular}

\end{table}

A random low-rank basis gives $\rho = 0.057$ because the margin changes that
it induces are smaller than the numerical noise of the forward pass. A
precision study that reruns the same 4-bit weights in 32-bit arithmetic
attributes this noise to the arithmetic rather than to weight quantization.
Under SGD, probe responses grow by a factor of 1.92 to 2.06 when the
perturbation size doubles, over the full horizon; under Adam this linear
scaling breaks down after about ten steps.

On Qwen3-0.6B, we validated the response model up to a parameter
displacement of 0.080 along the basis, with relative error 0.37 at that
radius. A single forgetting event needs a displacement of about 1.4 (median
margin 0.71 divided by median response norm 0.50), which is about
$18\times$ the validated radius. The task-B update noise of a 40-step run
alone accumulates a displacement of about 0.77, about ten times the
validated radius, so every run leaves the validated region, independently
of forgetting.

\section{Predictions of forgetting built on the response model}
\label{app:gap}

\begin{table}[h]
\centering\small
\caption{Five predictions built on the response model, each preregistered
with a success criterion and tested on data disjoint from the data that
suggested it. The outcome column gives the result that missed the
criterion: an error above the threshold, an effect outside the predicted
interval, a correlation that fell on replication, a rejected sufficiency
hypothesis, and a prediction of the size of the divergence that missed its
direction.}
\label{tab:gap}
\begin{tabular}{p{0.34\textwidth}p{0.2\textwidth}p{0.36\textwidth}}
\toprule
Prediction & Independent test & Outcome \\
\midrule
Per-fact event probabilities from the response model and a Gaussian noise
model & fresh split of facts & mean absolute error 0.060 against a
threshold of 0.05; all errors are false alarms \\
Rotating injected noise of fixed spectrum changes the forgetting rate by
0.145 between the extreme orientations & 32 paired runs, spectra
matched to $10^{-14}$ & difference 0.032, 95\% interval $[-0.043, 0.105]$,
excludes 0.145 \\
A drift statistic of the response coefficients ranks facts by prediction
error & fresh set of 128 facts & mean rank correlation falls from 0.639 to
0.075 \\
Two histories with the same margin vector have the same future distribution
& branching from matched states & energy-distance permutation test,
$p = 0.00035$ \\
A rank-2 projection of the protected gradients predicts how futures
diverge & runs branched from matched states & predicts how much futures diverge (Spearman
0.658, against 0.050 for a random projection) but misses the direction \\
\bottomrule
\end{tabular}
\end{table}

\section{History of the screening threshold}
\label{app:screen}
The screen comes from a rule that we preregistered before Studies 13 and
14: predict a violation of the simplified Freedman bound if the median of
$a_i/\sqrt{V_i/\kappa}$ over the certified facts is below 30. A first
version with an additional criterion failed its prospective test in Study 13
(four of five evaluable conditions correct). The simpler rule was correct in
six of six conditions of Study 14 when its predictions are recomputed from
the recorded statistic; the runner had stored the predictions of the first
version, a protocol deviation recorded in that preregistration, and those
stored labels are correct in five of six. The rule then failed a replication
across seeds that used an increment bound too small by a factor of two
(Appendix~\ref{app:record}). With the increment bound of
Equation~\ref{eq:c}, its only remaining error in that replication is
conservative: the bound held in one starting state that it flagged. The
screened bound differs from this rule in two ways. It uses the median of
$a_i/\sqrt{V_i}$, which is smaller by a factor of $\sqrt{\kappa} = \sqrt 2$,
so its threshold of 30 corresponds to about 42.4 on the earlier statistic.
It also uses the screen to select a bound rather than to predict a failure.
We fixed this definition in the code on 2026-09-15, before the prospective
studies.

Table~\ref{tab:sweep} shows, post hoc on the earlier conditions, how the
number of violating conditions and the certified fraction change with the
threshold and with uniform variance inflation. A threshold of 30 is the
smallest value in this grid at which every earlier condition passes.
Although the threshold was fixed before this table was computed, the
earlier data alone leave it indistinguishable from a fitted value, which is
why the prospective studies test it. Figure~\ref{fig:screen} shows the
result: the earlier failures lie below the threshold, and prospective
failures occurred above it. Uniform inflation of the variance estimate
leaves violating conditions even at $\kappa = \KappaMax$
(Table~\ref{tab:sweep}, right), which agrees with failures that are
confined to facts with $R \ge a$, on which the spread of the margin exceeds
the prediction by a larger factor than for other facts
(Appendix~\ref{app:spread}).

\begin{table}[h]
\centering\small
\caption{Post-hoc sweeps on the earlier conditions. Left: the screening
threshold. Right: the simplified Freedman bound with its variance estimate
inflated by $\kappa$ (the default is 2).}
\label{tab:sweep}
\begin{tabular}{lrr}
\toprule
Threshold on $\tilde z$ & violating cond. & certified \\
\midrule
0 (always simplified) & 8 & 79.6\% \\
10 & 8 & 79.6\% \\
15 & 6 & 77.1\% \\
20 & 2 & 67.2\% \\
25 & 1 & 66.2\% \\
30 & 0 & 62.3\% \\
35 & 0 & 59.6\% \\
40 & 0 & 54.1\% \\
50 & 0 & 45.0\% \\
60 & 0 & 34.8\% \\
80 & 0 & 29.7\% \\
always Azuma & 0 & 21.2\% \\
\bottomrule
\end{tabular}
\hspace{2em}\begin{tabular}{rrr}
\toprule
Inflation $\kappa$ & violating cond. & certified \\
\midrule
2 & 8 & 79.6\% \\
4 & 7 & 78.3\% \\
8 & 5 & 75.9\% \\
16 & 5 & 72.6\% \\
32 & 5 & 68.7\% \\
64 & 2 & 63.8\% \\
128 & 2 & 56.2\% \\
\bottomrule
\end{tabular}

\end{table}

\begin{figure}[h]
\centering
\includegraphics[width=0.76\textwidth]{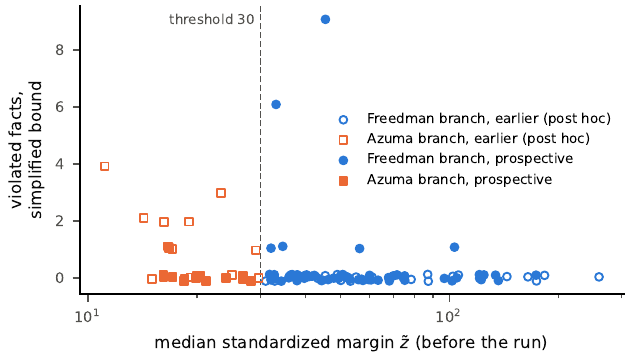}
\caption{Median standardized margin $\tilde z$ of the facts that the
simplified Freedman bound certifies, computed before the run, against the
number of facts on which that bound is significantly violated (with small
vertical jitter). Hollow: earlier conditions (post hoc); filled: all
\AllproNConds{} prospective conditions. The screen selects the Freedman
branch to the right of the dashed line.}
\label{fig:screen}
\end{figure}

\section{Calibration of the bounds}
\label{app:calibration}
Table~\ref{tab:calibration} groups the eligible facts of each set by the
value of their bound and compares the mean bound in each group with the mean
frequency of a forgetting event per test run. For a valid bound, the
frequency is at most the mean bound in every group, up to sampling error; a
group in which the frequency exceeds the mean bound shows where the bound
is too small.

\begin{table}[h]
\centering\scriptsize
\caption{Mean bound and frequency of forgetting events, grouped by the value
of the bound, for all \HistN{} earlier conditions and for the main and
horizon studies (the added 8B and 14B batches and the complete bound are
covered by Table~\ref{tab:main}).}
\label{tab:calibration}
\setlength{\tabcolsep}{3pt}
\begin{tabular}{llrrrrrr}
\toprule
& & \multicolumn{3}{c}{simplified Freedman bound} & \multicolumn{3}{c}{screened bound} \\
\cmidrule(lr){3-5}\cmidrule(lr){6-8}
Set & bound range & facts & mean bound & event freq. & facts & mean bound & event freq. \\
\midrule
Earlier & $[0, 0.001)$ & 1140 & 0.0001 & 0.0014 & 1009 & 0.0001 & 0.0000 \\
 & $[0.001, 0.01)$ & 270 & 0.0041 & 0.0038 & 151 & 0.0039 & 0.0000 \\
 & $[0.01, 0.05)$ & 196 & 0.0253 & 0.0110 & 96 & 0.0240 & 0.0020 \\
 & $[0.05, 0.2)$ & 179 & 0.1057 & 0.0239 & 83 & 0.1085 & 0.0041 \\
 & $[0.2, 0.5)$ & 91 & 0.3324 & 0.0797 & 67 & 0.3510 & 0.0420 \\
 & $[0.5, 1]$ & 141 & 0.8426 & 0.3510 & 611 & 0.8803 & 0.1022 \\
Main & $[0, 0.001)$ & 690 & 0.0001 & 0.0032 & 608 & 0.0001 & 0.0035 \\
 & $[0.001, 0.01)$ & 164 & 0.0042 & 0.0078 & 128 & 0.0042 & 0.0100 \\
 & $[0.01, 0.05)$ & 129 & 0.0238 & 0.0203 & 89 & 0.0250 & 0.0284 \\
 & $[0.05, 0.2)$ & 108 & 0.1095 & 0.0567 & 76 & 0.1049 & 0.0678 \\
 & $[0.2, 0.5)$ & 55 & 0.3321 & 0.1545 & 46 & 0.3393 & 0.1039 \\
 & $[0.5, 1]$ & 93 & 0.8422 & 0.3659 & 292 & 0.8696 & 0.1331 \\
Horizon & $[0, 0.001)$ & 190 & 0.0001 & 0.0000 & 148 & 0.0001 & 0.0000 \\
 & $[0.001, 0.01)$ & 40 & 0.0045 & 0.0000 & 38 & 0.0045 & 0.0000 \\
 & $[0.01, 0.05)$ & 13 & 0.0226 & 0.0000 & 14 & 0.0238 & 0.0000 \\
 & $[0.05, 0.2)$ & 8 & 0.1149 & 0.0000 & 4 & 0.1316 & 0.0000 \\
 & $[0.2, 0.5)$ & 4 & 0.3767 & 0.0000 & 15 & 0.3559 & 0.0000 \\
 & $[0.5, 1]$ & 3 & 0.8779 & 0.5729 & 39 & 0.8565 & 0.0441 \\
\bottomrule
\end{tabular}

\end{table}

\section{Earlier conditions}
\label{app:history}
Table~\ref{tab:history} lists all \HistN{} earlier conditions, with all
bounds recomputed with the increment bound of Equation~\ref{eq:c}. Entries
are violating facts over certified facts, and the branch is F (Freedman
branch) or A (Azuma branch). $\tilde z$ is the median standardized margin of
the facts that the simplified Freedman bound certifies. Studies 6 to 11
were single-seed studies of model size and learning rate, Studies 13 and 14
tested the earlier screening rule, Study 17 varied the seed at fixed
configurations, and Study 18 used CounterFact. The rebuild of
Appendix~\ref{app:seeds} that produced byte-identical files is counted once.
Four of these conditions lack the stored probe file from which $R$ is
computed; Section~\ref{sec:evidence} and Table~\ref{tab:main}
therefore use the other \EarNConds, while the threshold sweeps of
Appendix~\ref{app:screen} use all \HistN. In the evaluable conditions of
Studies 17 and 18, the reference margin $a_i$ ranked facts by forgetting
risk with a higher area under the ROC curve than the simplified Freedman
bound in five of six
conditions, so the reference margin is an informative ranking score,
although it provides a ranking rather than a probability.

{\scriptsize
\setlength{\tabcolsep}{3pt}
\begin{longtable}{llrrlrrrrr}
\caption{Earlier conditions (post hoc).}\label{tab:history}\\
\toprule
Study & Model & lr & seed & branch & $\tilde z$ & events & simplified & Azuma & screened \\
\midrule
\endfirsthead
\toprule
Study & Model & lr & seed & branch & $\tilde z$ & events & simplified & Azuma & screened \\
\midrule
\endhead
S18 & Llama-3.2-1B & 0.002 & 11 & F & 164.9 & 0 & 0/60 & 0/47 & 0/60 \\
S6 & Llama-3.2-1B & 0.002 & 11 & F & 183.3 & 0 & 0/64 & 0/64 & 0/64 \\
S13 & Llama-3.2-1B & 0.006 & 11 & F & 57.1 & 56 & 0/58 & 0/28 & 0/58 \\
S13 & Llama-3.2-1B & 0.006 & 11 & F & 52.0 & 16 & 0/56 & 0/23 & 0/56 \\
S18 & Llama-3.2-1B & 0.008 & 11 & F & 55.2 & 0 & 0/45 & 0/14 & 0/45 \\
S18 & Llama-3.2-1B & 0.008 & 43 & F & 87.1 & 12 & 0/46 & 0/16 & 0/46 \\
S14 & Llama-3.2-1B & 0.009 & 11 & F & 50.0 & 31 & 0/45 & 0/4 & 0/45 \\
S13 & Llama-3.2-1B & 0.012 & 11 & F & 39.3 & 8 & 0/31 & 0/2 & 0/31 \\
S9a & Llama-3.2-1B & 0.012 & 11 & F & 52.4 & 83 & 0/36 & 0/0 & 0/36 \\
S13 & Llama-3.2-1B & 0.012 & 23 & A & 18.5 & 74 & 0/32 & 0/0 & 0/0 \\
S13 & Llama-3.2-1B & 0.012 & 37 & A & 19.0 & 210 & 2/30 & 0/1 & 0/1 \\
S17 & Llama-3.2-1B & 0.012 & 41 & F & 68.2 & 0 & 0/30 & 0/12 & 0/30 \\
S17 & Llama-3.2-1B & 0.012 & 43 & A & 19.2 & 30 & 0/22 & 0/0 & 0/0 \\
S17 & Llama-3.2-1B & 0.012 & 47 & A & 15.1 & 111 & 0/24 & 0/0 & 0/0 \\
S17 & Llama-3.2-1B & 0.012 & 59 & F & 48.9 & 20 & 0/31 & 0/11 & 0/31 \\
S17 & Llama-3.2-1B & 0.012 & 61 & F & 144.2 & 0 & 0/31 & 0/12 & 0/31 \\
S17 & Llama-3.2-1B & 0.012 & 67 & A & 16.2 & 71 & 0/29 & 0/0 & 0/0 \\
S14 & Llama-3.2-1B & 0.015 & 11 & F & 40.8 & 180 & 0/34 & 0/0 & 0/34 \\
S13 & Llama-3.2-1B & 0.018 & 11 & F & 43.2 & 16 & 0/31 & 0/0 & 0/31 \\
S18 & Llama-3.2-1B & 0.02 & 11 & A & 29.1 & 83 & 1/30 & 0/0 & 0/0 \\
S13 & Llama-3.2-1B & 0.024 & 11 & F & 31.0 & 0 & 0/26 & 0/0 & 0/26 \\
S13 & Llama-3.2-1B & 0.03 & 11 & A & 25.0 & 0 & 0/25 & 0/0 & 0/0 \\
S18 & Qwen3-0.6B & 0.002 & 11 & F & 174.1 & 0 & 0/63 & 0/41 & 0/63 \\
S6 & Qwen3-0.6B & 0.002 & 11 & F & 259.4 & 5 & 0/62 & 0/58 & 0/62 \\
S13 & Qwen3-0.6B & 0.006 & 11 & F & 78.3 & 0 & 0/42 & 0/18 & 0/42 \\
S18 & Qwen3-0.6B & 0.008 & 11 & F & 53.2 & 1 & 0/48 & 0/13 & 0/48 \\
S18 & Qwen3-0.6B & 0.008 & 43 & F & 61.1 & 13 & 0/50 & 0/13 & 0/50 \\
S14 & Qwen3-0.6B & 0.012 & 11 & A & 17.1 & 116 & 1/26 & 0/1 & 0/1 \\
S13 & Qwen3-0.6B & 0.018 & 11 & A & 23.3 & 179 & 3/21 & 0/1 & 0/1 \\
S17 & Qwen3-0.6B & 0.018 & 41 & F & 39.7 & 233 & 0/29 & 0/3 & 0/29 \\
S17 & Qwen3-0.6B & 0.018 & 43 & F & 105.9 & 0 & 0/28 & 0/0 & 0/28 \\
S17 & Qwen3-0.6B & 0.018 & 47 & F & 102.2 & 0 & 0/32 & 0/6 & 0/32 \\
S17 & Qwen3-0.6B & 0.018 & 59 & F & 67.8 & 5 & 0/27 & 0/5 & 0/27 \\
S17 & Qwen3-0.6B & 0.018 & 61 & F & 36.5 & 0 & 0/30 & 0/0 & 0/30 \\
S17 & Qwen3-0.6B & 0.018 & 67 & F & 45.3 & 6 & 0/27 & 0/0 & 0/27 \\
S18 & Qwen3-0.6B & 0.02 & 11 & A & 29.6 & 53 & 0/24 & 0/0 & 0/0 \\
S14 & Qwen3-0.6B & 0.021 & 11 & A & 16.2 & 175 & 2/17 & 0/0 & 0/0 \\
S13 & Qwen3-0.6B & 0.024 & 11 & A & 11.1 & 98 & 4/20 & 0/0 & 0/0 \\
S13 & Qwen3-0.6B & 0.03 & 11 & A & 16.7 & 45 & 1/23 & 0/0 & 0/0 \\
S14 & Qwen3-1.7B & 0.006 & 11 & F & 87.4 & 78 & 0/48 & 0/17 & 0/48 \\
S14 & Qwen3-1.7B & 0.012 & 11 & F & 42.3 & 0 & 0/31 & 0/0 & 0/31 \\
S14 & Qwen3-1.7B & 0.018 & 11 & F & 32.8 & 28 & 0/27 & 0/0 & 0/27 \\
S14 & Qwen3-1.7B & 0.024 & 11 & F & 36.1 & 0 & 0/26 & 0/0 & 0/26 \\
S11b & Qwen3-14B & 0.008 & 11 & F & 44.5 & 34 & 0/51 & 0/9 & 0/51 \\
S10 & Qwen3-8B & 0.01 & 11 & A & 14.2 & 33 & 2/38 & 0/8 & 0/8 \\
\bottomrule
\end{longtable}

}

\section{Spread of the margins across test runs}
\label{app:spread}
For each eligible fact, we divide the standard deviation of the final margin
across the test runs by the standard deviation $\sqrt K$ that the
first-order model predicts. Over all \AllNConds{} conditions, the median of
this ratio was
\SpreadStable{} on facts with $R < a$, \SpreadUnstable{} on facts with
$R \ge a$, and \SpreadViolated{} on the facts that violated the simplified
bound. Of the \SpreadWideN{} facts with a ratio above 10, \SpreadWideRge{}
had $R \ge a$. The factor $\kappa = 2$ in $V$ allows for a ratio of up to
$\sqrt 2$, and \SpreadAboveKappa{} of the facts with $R < a$ exceeded it, so
the variance estimate is low for most facts. The complete bound held
nevertheless, because the facts that it certifies lie far from the boundary
compared with the realized spread: for \CompleteRealZThree{} of them,
$a - R$ exceeded three standard deviations of the realized final margin.

\section{Forgetting risk and the ratio $R/a$}
\label{app:ratio}
Figure~\ref{fig:ratio} groups the eligible facts of all \AllNConds{}
conditions with $a > 0$ by $R/a$. Of the \RatioBelowN{} facts with
$R < a$, \RatioBelowForgot{} were forgotten in any test run. Above $R = a$,
the share of facts forgotten in at least one test run rises steadily with
$R/a$, to \RatioTopShare{} for $R/a \ge 16$. The ratio therefore ranks facts
by forgetting risk before the run, and the bounds add a probability for the
facts that they certify. The remaining \RatioNearN{} eligible facts have
$a \le 0$: their reference margin stays positive but comes within $B_0$ of
zero, every bound leaves them uncertified, and \RatioNearForgot{} of them
were forgotten in at least one test run.

\begin{figure}[h]
\centering
\includegraphics[width=0.72\textwidth]{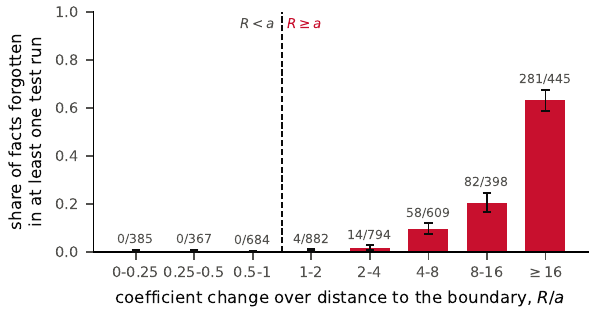}
\caption{Share of eligible facts forgotten in at least one test run, in
bins of $R/a$, over all \AllNConds{} conditions (facts with $a > 0$). Bars
show the share,
whiskers the Wilson 95\% interval, and labels the number of forgotten facts
over the number of facts in the bin.}
\label{fig:ratio}
\end{figure}

\section{Power of the test and pooled forgetting counts}
\label{app:power}
The per-fact test has limited power. With \PowerRuns{} test runs and a
Bonferroni correction over \PowerFamily{} facts, a fact with $U = 0.05$ is
flagged only if it is forgotten in at least \PowerKFive{} runs, and the test
reaches power 0.8 only when the true forgetting probability is at least
\PowerPEightyFive{} (Table~\ref{tab:power}). A condition that passes the
test therefore rules out only large excesses over the bound, and the
evidence about smaller excesses comes from pooling.

Table~\ref{tab:pooled} in Section~\ref{sec:pooled} gives the pooled counts.
The allowed number of events for a rule is the sum over its certified facts
of the number of runs times $U_i$. Events of different facts in one test run
are dependent, so we report these counts as descriptive totals. For the
margin threshold, the per-fact rule and the restricted margin threshold,
which the table omits, the certified facts were forgotten
\PooledMarginEvents{}, \PooledPerfactEvents{} and \PooledMarginREvents{}
times, against \PooledMarginAllowed, \PooledPerfactAllowed{} and
\PooledMarginRAllowed{} allowed events.

\begin{table}[h]
\centering\small
\caption{Power of the per-fact test with \PowerRuns{} test runs and a
Bonferroni correction over \PowerFamily{} facts: the smallest number of
forgetting events that is significant against $U$, the probability of
reaching it for true forgetting probabilities $p$, and the smallest $p$ at
which this probability reaches 0.8.}
\label{tab:power}
\begin{tabular}{rrrrrrr}
\toprule
$U$ & min.\ events & $p = 0.1$ & $p = 0.2$ & $p = 0.3$ & $p = 0.5$ & $p$ at power 0.8 \\
\midrule
0.05 & 8 & 0.01 & 0.30 & 0.79 & 1.00 & 0.304 \\
0.01 & 4 & 0.40 & 0.91 & 0.99 & 1.00 & 0.166 \\
0.001 & 2 & 0.84 & 0.99 & 1.00 & 1.00 & 0.091 \\
\bottomrule
\end{tabular}

\end{table}

\section{Variation across seeds and builds}
\label{app:seeds}
Outcomes vary strongly across seeds of one configuration. Among the earlier
conditions at fixed configurations, the number of forgetting events of
eligible facts ranged from \FixedEventsLo{} to \FixedEventsHi{} across seeds
for Llama-3.2-1B and from \SeedQwenEventsLo{} to \SeedQwenEventsHi{} for
Qwen3-0.6B, and the new seeds of the second confirmatory study gave far
fewer events than earlier seeds at the same settings. The seed sets both
the starting state and the minibatch orders of the test runs, so these
ranges mix the two effects. We also rebuilt two earlier conditions from an empty copy of the
repository. One gave byte-identical files. The other gave a different
starting state (mean margin of the protected facts 0.658 instead of 0.608)
and a different outcome, although the configuration, data, package
versions, model revision and GPU were the same. Bounds must therefore be
computed for each starting state.

\section{Prospective conditions}
\label{app:study20}
The family-wise statements of the screened bound were slightly above their
nominal level. In the main study, the largest set of facts per condition
whose screened bounds sum to at most 0.05 contained \SFamilyLic{} of
eligible facts, and some fact of this set was forgotten in \SFamilyExit{} of
\SFamilyTraj{} test runs (\SFamilyRate{} against 5\%, one-sided
$p = \SFamilyP$).

\begin{figure}[h]
\centering
\includegraphics[width=\textwidth]{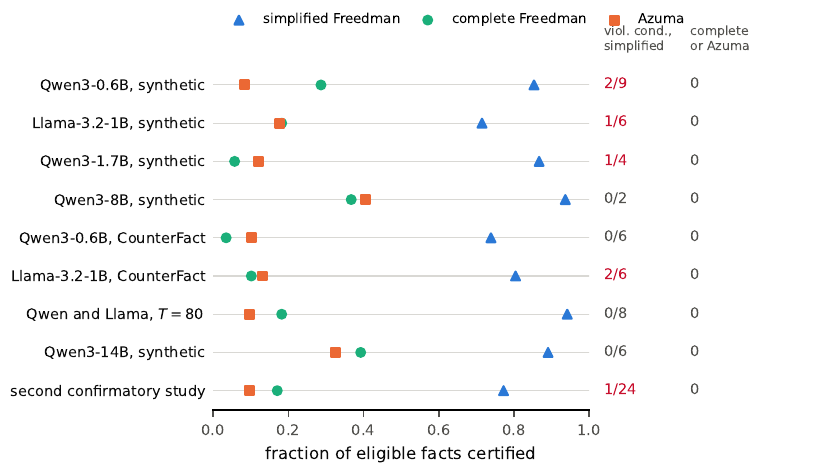}
\caption{Fraction of eligible facts certified by the simplified Freedman
bound, the complete Freedman bound and the Azuma bound in each condition set
of the main, horizon and 14B studies and in the second confirmatory study,
with the number of conditions with a violation (right).}
\label{fig:sets}
\end{figure}

{\scriptsize
\setlength{\tabcolsep}{3pt}
\begin{longtable}{lrrlrrrrr}
\caption{Main prospective study, every condition ($T = 40$).}\label{tab:study20cond}\\
\toprule
Set & lr & seed & branch & $\tilde z$ & events & simplified & Azuma & screened \\
\midrule
\endfirsthead
\toprule
Set & lr & seed & branch & $\tilde z$ & events & simplified & Azuma & screened \\
\midrule
\endhead
A & 0.012 & 71 & F & 173.7 & 0 & 0/32 & 0/13 & 0/32 \\
A & 0.012 & 73 & F & 56.4 & 50 & 1/33 & 0/4 & 1/33 \\
A & 0.012 & 79 & F & 45.4 & 242 & 9/31 & 0/1 & 9/31 \\
A & 0.018 & 71 & F & 122.5 & 0 & 0/32 & 0/2 & 0/32 \\
A & 0.018 & 73 & F & 37.8 & 0 & 0/32 & 0/3 & 0/32 \\
A & 0.018 & 79 & F & 69.9 & 267 & 0/17 & 0/0 & 0/17 \\
A & 0.021 & 71 & F & 101.7 & 0 & 0/32 & 0/0 & 0/32 \\
A & 0.021 & 73 & F & 38.0 & 0 & 0/30 & 0/2 & 0/30 \\
A & 0.021 & 79 & F & 63.4 & 44 & 0/17 & 0/0 & 0/17 \\
B & 0.012 & 71 & F & 68.2 & 132 & 0/30 & 0/21 & 0/30 \\
B & 0.012 & 73 & F & 44.3 & 5 & 0/31 & 0/7 & 0/31 \\
B & 0.012 & 79 & A & 24.1 & 49 & 0/17 & 0/0 & 0/0 \\
B & 0.018 & 71 & A & 26.9 & 88 & 0/28 & 0/7 & 0/7 \\
B & 0.018 & 73 & F & 38.9 & 0 & 0/32 & 0/3 & 0/32 \\
B & 0.018 & 79 & A & 16.6 & 45 & 1/15 & 0/0 & 0/0 \\
C & 0.012 & 71 & F & 46.2 & 0 & 0/29 & 0/3 & 0/29 \\
C & 0.012 & 73 & F & 33.1 & 160 & 6/49 & 0/16 & 6/49 \\
C & 0.021 & 71 & A & 20.5 & 0 & 0/29 & 0/0 & 0/0 \\
C & 0.021 & 73 & A & 16.2 & 177 & 0/30 & 0/0 & 0/0 \\
D & 0.01 & 71 & \multicolumn{6}{l}{excluded (task A)} \\
D & 0.01 & 73 & \multicolumn{6}{l}{excluded (task A)} \\
E & 0.008 & 71 & F & 52.5 & 2 & 0/46 & 0/8 & 0/46 \\
E & 0.008 & 73 & F & 49.9 & 42 & 0/50 & 0/12 & 0/50 \\
E & 0.008 & 79 & F & 59.8 & 25 & 0/47 & 0/10 & 0/47 \\
E & 0.02 & 71 & A & 26.9 & 67 & 0/22 & 0/0 & 0/0 \\
E & 0.02 & 73 & F & 32.9 & 69 & 0/26 & 0/0 & 0/26 \\
E & 0.02 & 79 & A & 21.2 & 74 & 0/24 & 0/0 & 0/0 \\
F & 0.008 & 71 & F & 49.3 & 12 & 0/47 & 0/12 & 0/47 \\
F & 0.008 & 73 & F & 53.0 & 15 & 0/47 & 0/11 & 0/47 \\
F & 0.008 & 79 & F & 60.7 & 12 & 0/47 & 0/13 & 0/47 \\
F & 0.02 & 71 & F & 32.1 & 64 & 0/26 & 0/0 & 0/26 \\
F & 0.02 & 73 & F & 32.1 & 61 & 1/27 & 0/0 & 1/27 \\
F & 0.02 & 79 & F & 34.6 & 51 & 1/28 & 0/0 & 1/28 \\
\bottomrule
\end{longtable}

}

\begin{table}[h]
\centering\scriptsize
\caption{Horizon study ($T = 80$). The branch is given for the paired
condition of the main study ($T = 40$) and for $T = 80$. Entries are
significant violations over certified facts at $T = 80$; the last column is
the number of facts that the screened bound certifies at $T = 40$.}
\label{tab:study21}
\setlength{\tabcolsep}{3pt}
\begin{tabular}{lrrlrrrrrr}
\toprule
Model & lr & seed & branch 40/80 & $\tilde z$ & events & simplified & Azuma & screened & screened at 40 \\
\midrule
Qwen3-0.6B & 0.012 & 71 & F/F & 136.6 & 0 & 0/32 & 0/6 & 0/32 & 32 \\
Qwen3-0.6B & 0.012 & 73 & F/F & 51.9 & 0 & 0/32 & 0/4 & 0/32 & 33 \\
Qwen3-0.6B & 0.018 & 71 & F/F & 124.6 & 0 & 0/32 & 0/0 & 0/32 & 32 \\
Qwen3-0.6B & 0.018 & 73 & F/F & 39.6 & 0 & 0/32 & 0/0 & 0/32 & 32 \\
Llama-3.2-1B & 0.012 & 71 & F/A & 26.9 & 30 & 0/27 & 0/6 & 0/6 & 30 \\
Llama-3.2-1B & 0.012 & 73 & F/F & 50.6 & 0 & 0/32 & 0/7 & 0/32 & 31 \\
Llama-3.2-1B & 0.018 & 71 & A/A & 17.1 & 25 & 0/24 & 0/2 & 0/2 & 7 \\
Llama-3.2-1B & 0.018 & 73 & F/F & 35.9 & 0 & 0/32 & 0/0 & 0/32 & 32 \\
\bottomrule
\end{tabular}

\end{table}

\begin{table}[h]
\centering\scriptsize
\caption{Qwen3-14B study ($T = 40$), two preregistered batches (Studies 22
and 23). Conditions below the task-A accuracy threshold are marked as
excluded. Entries are violating facts over certified facts.}
\label{tab:study22}
\setlength{\tabcolsep}{3pt}
\begin{tabular}{rrrlrrrrr}
\toprule
Study & lr & seed & branch & $\tilde z$ & events & simplified & Azuma & screened \\
\midrule
22 & 0.008 & 71 & \multicolumn{6}{l}{excluded (task A)} \\
22 & 0.012 & 71 & \multicolumn{6}{l}{excluded (task A)} \\
22 & 0.008 & 73 & F & 121.2 & 14 & 0/47 & 0/40 & 0/47 \\
22 & 0.012 & 73 & F & 32.6 & 30 & 0/44 & 0/16 & 0/44 \\
23 & 0.008 & 79 & F & 31.8 & 11 & 0/54 & 0/28 & 0/54 \\
23 & 0.012 & 79 & A & 18.4 & 65 & 0/37 & 0/16 & 0/16 \\
23 & 0.008 & 83 & F & 37.9 & 0 & 0/64 & 0/10 & 0/64 \\
23 & 0.012 & 83 & A & 20.0 & 24 & 0/56 & 0/0 & 0/0 \\
\bottomrule
\end{tabular}

\end{table}

\begin{table}[h]
\centering\scriptsize
\caption{Added Qwen3-8B batch ($T = 40$, Study 24). Both Qwen3-8B conditions
of the main study were below the task-A accuracy threshold
(Table~\ref{tab:study20cond}); this batch forms the Qwen3-8B row of
Table~\ref{tab:main}. Entries are violating facts over certified facts.}
\label{tab:study24}
\setlength{\tabcolsep}{3pt}
\begin{tabular}{rrlrrrrr}
\toprule
lr & seed & branch & $\tilde z$ & events & simplified & Azuma & screened \\
\midrule
0.01 & 79 & F & 47.8 & 0 & 0/46 & 0/20 & 0/46 \\
0.01 & 83 & F & 50.7 & 8 & 0/28 & 0/12 & 0/28 \\
\bottomrule
\end{tabular}

\end{table}

\begin{table}[h]
\centering\scriptsize
\caption{Second confirmatory study ($T = 40$, Study 25). Entries are
violating facts over certified facts for the simplified, complete and Azuma
bounds, the per-time bound, and the simplified bound restricted to facts
with $R < a$. The last column is the number of facts violated by the
simplified bound that have $R \ge a$.}
\label{tab:study25}
\setlength{\tabcolsep}{3pt}
\begin{tabular}{lrrrrrrrrr}
\toprule
model & lr & seed & events & simplified & complete & Azuma & per-time & simplified, $R < a$ & $R \ge a$ \\
\midrule
Qwen3-0.6B & 0.012 & 89 & 0 & 0/32 & 0/18 & 0/0 & 0/29 & 0/26 & 0 \\
Qwen3-0.6B & 0.012 & 97 & 0 & 0/32 & 0/5 & 0/2 & 0/32 & 0/8 & 0 \\
Qwen3-0.6B & 0.012 & 101 & 0 & 0/31 & 0/19 & 0/0 & 0/30 & 0/26 & 0 \\
Qwen3-0.6B & 0.012 & 103 & 20 & 0/35 & 0/18 & 0/4 & 0/31 & 0/22 & 0 \\
Qwen3-0.6B & 0.012 & 107 & 36 & 0/28 & 0/8 & 0/0 & 0/28 & 0/15 & 0 \\
Qwen3-0.6B & 0.012 & 109 & 65 & 0/26 & 0/6 & 0/7 & 0/23 & 0/12 & 0 \\
Qwen3-0.6B & 0.018 & 89 & 0 & 0/30 & 0/0 & 0/0 & 0/22 & 0/18 & 0 \\
Qwen3-0.6B & 0.018 & 97 & 0 & 0/22 & 0/0 & 0/0 & 0/21 & 0/0 & 0 \\
Qwen3-0.6B & 0.018 & 101 & 28 & 0/29 & 0/0 & 0/0 & 0/25 & 0/6 & 0 \\
Qwen3-0.6B & 0.018 & 103 & 100 & 0/30 & 0/10 & 0/0 & 0/29 & 0/13 & 0 \\
Qwen3-0.6B & 0.018 & 107 & 0 & 0/25 & 0/0 & 0/0 & 0/19 & 0/0 & 0 \\
Qwen3-0.6B & 0.018 & 109 & 0 & 0/21 & 0/4 & 0/3 & 0/17 & 0/5 & 0 \\
Qwen3-0.6B & 0.021 & 89 & 0 & 0/30 & 0/0 & 0/0 & 0/21 & 0/12 & 0 \\
Qwen3-0.6B & 0.021 & 97 & 0 & 0/20 & 0/0 & 0/0 & 0/17 & 0/0 & 0 \\
Qwen3-0.6B & 0.021 & 101 & 0 & 0/29 & 0/0 & 0/0 & 0/24 & 0/3 & 0 \\
Qwen3-0.6B & 0.021 & 103 & 71 & 0/28 & 0/2 & 0/0 & 0/28 & 0/6 & 0 \\
Qwen3-0.6B & 0.021 & 107 & 0 & 0/24 & 0/0 & 0/0 & 0/18 & 0/0 & 0 \\
Qwen3-0.6B & 0.021 & 109 & 1 & 0/18 & 0/2 & 0/1 & 0/16 & 0/5 & 0 \\
Qwen3-1.7B & 0.012 & 89 & 22 & 0/34 & 0/8 & 0/17 & 0/33 & 0/10 & 0 \\
Qwen3-1.7B & 0.012 & 97 & 30 & 0/40 & 0/3 & 0/4 & 0/40 & 0/8 & 0 \\
Qwen3-1.7B & 0.012 & 101 & 0 & 0/32 & 0/3 & 0/0 & 0/32 & 0/9 & 0 \\
Qwen3-1.7B & 0.012 & 103 & 94 & 0/34 & 0/7 & 0/24 & 0/32 & 0/7 & 0 \\
Qwen3-1.7B & 0.012 & 107 & 0 & 0/32 & 0/30 & 0/23 & 0/32 & 0/30 & 0 \\
Qwen3-1.7B & 0.012 & 109 & 1 & 1/27 & 0/9 & 0/1 & 0/27 & 0/10 & 1 \\
\bottomrule
\end{tabular}

\end{table}

\clearpage
\section{Preregistration record}
\label{app:record}
In the supplementary material, the main prospective study is Study 20, the
horizon study is Study 21, the two batches of the 14B study are Studies 22
and 23, the added Qwen3-8B batch is Study 24, and the second confirmatory
study is Study 25. Three addenda to Study 20
add the margin threshold and the per-fact rule (Addenda 1 and 2, committed
after the first condition had finished and its result had been seen) and the
confirmatory test of the complete bound (Addendum 3, committed after the
interim look, with the list of excluded runs).
The material also contains the preregistrations of all earlier studies, the
frozen analysis scripts, the verbatim results of every preregistered
criterion including the failed ones, and the script that generates every
number in this paper from the run directories.

During development we found and corrected three defects, and all numbers in
this paper use the corrected versions. First, the key of a probe cache
omitted the length of task A, so probes of one starting state could be used
for a run from another; we excluded the affected runs and kept them with a
written reason. Second, a rule that selected the newest run directory could
mix two concurrent experiments; we excluded those runs as well. Third,
earlier code used the increment bound $\eta G_0 r$ instead of $2\eta G_0 r$
(Equation~\ref{eq:c}); we recomputed every bound in this paper with the
correct value.

\end{document}